\documentclass{article}
\usepackage{graphicx}
\usepackage{subfigure} 
\setlength\floatsep{0.1in}
\usepackage{natbib}
\usepackage{algorithm}
\usepackage{algorithmic}
\usepackage[hypertex]{hyperref} 


\usepackage[cmex10]{amsmath}
\usepackage{amssymb} 
\usepackage{amsfonts}
\usepackage{amsthm}
\usepackage{bm}
\usepackage{color}
\usepackage{url}
\newcommand{\red}[1]{#1} 

\newtheorem{thm}{Theorem}
\newtheorem{lem}{Lemma}

\newtheorem{rmk}{Remark}

\newcommand{\mb}[1]{\mbox{\boldmath $#1$}}

\newcommand{\valpha}[0]{\mb{\alpha}}
\newcommand{\vbeta}[0]{\mb{\beta}}

\newcommand{\vxi}[0]{\mb{\xi}}
\newcommand{\vmu}[0]{\mb{\mu}}
\newcommand{\vnu}[0]{\mb{\nu}}

\newcommand{\vQ}[0]{\mb{Q}}
\newcommand{\vM}[0]{\mb{M}}

\newcommand{\vc}[0]{\mb{c}}
\newcommand{\vd}[0]{\mb{d}}
\newcommand{\vg}[0]{\mb{g}}

\newcommand{\vp}[0]{\mb{p}}
\newcommand{\vq}[0]{\mb{q}}
\newcommand{\vw}[0]{\mb{w}}
\newcommand{\vx}[0]{\mb{x}}
\newcommand{\vy}[0]{\mb{y}}
\newcommand{\vz}[0]{\mb{z}}

\newcommand{\tvQ}{\tilde{\vQ}}
\newcommand{\tvy}{\tilde{\vy}}
\newcommand{\tvg}{\tilde{\mb{g}}}

\newcommand{\tvbeta}{\tilde{\vbeta}}

\newcommand{\wtD}[0]{\widetilde{D}}

\newcommand{\wtvalpha}[0]{\widetilde{\valpha}}

\newcommand{\whvbeta}[0]{\widehat{\vbeta}}

\newcommand{\whg}[0]{\widehat{g}}
\newcommand{\whvg}[0]{\widehat{\vg}}
\newcommand{\whbeta}[0]{\widehat{\beta}}

\newcommand{\pd}[2]{{\frac{\partial #1}{\partial #2}}}

\newcommand{\lw}[1]{\smash{\lower2.ex\hbox{#1}}}

\newcommand{\RR}{\mathbb{R}}


\newcommand{\cB}{{\cal B}}

\newcommand{\cF}{{\cal F}}

\newcommand{\cI}{{\cal I}}

\newcommand{\cM}{{\cal M}}

\newcommand{\cO}{{\cal O}}

\newcommand{\cS}{{\cal S}}
\newcommand{\cT}{{\cal T}}

\newcommand{\cX}{{\cal X}}

\title{Suboptimal Solution Path Algorithm for \\ Support Vector Machine}
\author{Masayuki Karasuyama, Ichiro Takeuchi \\
Nagoya Institute of Technology}
\date{}
\begin{document} 
\maketitle
 
\begin{abstract} 
We consider a \emph{suboptimal solution path algorithm} for  the
Support Vector Machine. The solution path algorithm is an
effective tool for solving a sequence of a parametrized
optimization problems in machine learning. The path of the
solutions provided by this algorithm are very accurate and they
satisfy the optimality conditions more strictly than other SVM
optimization algorithms. In many machine learning application,
however, this strict optimality is often unnecessary, and it
adversely affects the computational efficiency. Our algorithm can
generate the path of suboptimal solutions within an arbitrary
user-specified tolerance level. It allows us to control the
trade-off between the accuracy of the solution and the
computational cost. Moreover, We also show that our suboptimal
solutions can be interpreted as the solution of a \emph{perturbed
optimization problem} from the original one. We provide some
theoretical analyses of our algorithm based on this novel
interpretation. The experimental results also demonstrate the
effectiveness of our algorithm.
\end{abstract}

\section{Introduction}
\label{sec:intro}

Recently, the \emph{solution path algorithm}~\citep{Efron04, Hastie04, Cauwenberghs01} has been widely recognized as one of the effective tools in machine learning.
It can efficiently compute a sequence of the solutions of a parametrized optimization problem.
This technique is originally developed as \emph{parametric programming} in the optimization community~\citep{UWaterloo:Best:1982}.

In a class of parametric quadratic programs (QPs), the solution path is represented as a piecewise-linear function of the problem parameters.
If we regard the regularization parameter of the Support Vector Machine (SVM) as problem parameter, the optimization problem for the SVM is categorized in this class. 
Therefore, the SVM solutions are represented as piecewise-linear functions of the regularization parameter.

The solutions of these parametric QPs are characterized by active constraint set in the current solution.
The linearity of the path comes from the fact that the Karush-Khun-Tucker (KKT) optimality conditions of these problems are represented as a linear system defined by the current active set, 
while 
the ``piecewise-ness'' is the consequence of the changes in the active set. 
The piecewise-linear solution path algorithm repeatedly updates the
linear system and active set.
The point of active set change is called \emph{breakpoint} in the
literature.
The path of solutions generated by this algorithm is very accurate and
they satisfy the optimality conditions more strictly than other algorithms.

Many machine learning problems, however, do not require strict optimality of the solution. 
In fact, one of the popular SVM optimization algorithm, called sequential minimal optimization (SMO)~\cite{Platt99}, is known to produce suboptimal (approximated) solution, where the tolerance to the optimality (degree of approximated) can be specified by users. 
In many experimental studies, it has been demonstrated that the generalization performances of these suboptimal solutions are not significantly different from those of strictly optimal ones. 

Therefore, the strict optimality of the solution path algorithm is often unnecessary.
Furthermore, it adversely affects the computational efficiency of the algorithm.
In fact, the solution path algorithm can be very slow when it encounters a large number of (seemingly redundant) breakpoints. 
Although some empirical studies suggest that the number of breakpoints grows linearly in the input size, in the worst case,  it can grow exponentially \citep{Gartner10}.
Another difficulty is in starting the solution path algorithm from
an approximated solution, for example obtained by SMO, because it does
not satisfy the strict optimality requirement.

In order to address these issues in the current solution path algorithm, we introduce a \emph{suboptimal solution path algorithm}. 
Our algorithm also generates piecewise-linear solution path, but the optimality tolerance (approximation level) can be arbitrary controlled by users.
It allows to control the trade-off between the accuracy of the solution and the computational cost. 

The presented suboptimal solution path algorithm has the following properties. 
\begin{itemize}
 \item First, the algorithm can reduce the number of breakpoints (which is the main computational bottleneck in solution path algorithm)
       by allowing multiple active set changes at one breakpoint. 
       Although this modification causes what is called \emph{degeneracy} problem, 
       we provide an efficient and accurate way to solve this issue. 
       We empirically show that reducing the number of breakpoints can work effectively to the computational efficiency.

 \item Second, the suboptimal solutions obtained by the algorithm can be interpreted as the solution of a \emph{perturbed optimization problem} from the original one. 
       This novel interpretation provides several insights into the properties of our suboptimal solutions. 
       We present some theoretical analyses of our suboptimal solutions using this interpretation.
\end{itemize}

We also empirically investigate several practical properties of our approach.
Although, our algorithm updates multiple active constraints at one
breakpoint, we observe that the entire changing patterns of the active
sets are very similar to those of the exact path.
Moreover, despite its computational efficiency, the generalization performance of our suboptimal path is comparable to conventional one.

To the best of our knowledge, there are no previous works for suboptimal
solution path algorithm with controllable optimality tolerance that can
be applicable to standard SVM formulation~\footnote{
\citet{Giesen10} proposed approximated path algorithm with some optimality
guarantee that can be applicable to L2-SVM without bias term. 
}.
Although many authors mimic the solution path by just repeating the
warm-start on finely grid points \citep[e.g.,][]{Friedman07}, this approach does not provide any guarantee about the intermediate solutions between grid points.
In this paper we focus our attention to the solution path algorithm for standard SVM, but the presented approach can be applied to other problems in the aforementioned QP class.


\section{Solution Path for Support Vector Machine}
\label{sec:svm_path}

In this section, we describe the solution path algorithm 
for regularization parameters of Support
Vector Machine (SVM).

\subsection{Support Vector Machine}

Suppose we have a set of training data 
$\{(\vx_i, y_i)\}_{i = 1}^n$,
where 
$\vx_i \in \cX \subseteq \RR^p$
is the input and 
$y_i \in \{-1, +1\}$ 
is the output class label.
SVM learns a linear discriminant function 
$f(\vx) = \vw^\top \Phi(\vx) + \alpha_0$
in a feature space $\cF$, where 
$\Phi: \cX \rightarrow \cF$
is a map from the input space $\cX$ to the feature space 
$\cF$, 
$\vw \in \cF$ is a coefficient vector and
$\alpha_0 \in \RR$
is a bias term.

In this paper, we consider the optimization problem of the following
form:
\begin{align}
   \min_{\vw, \alpha_0, \{\xi_i\}_{i=1}^n} ~~& 
 \textstyle{
 \frac{1}{2} \| \mb{w} \|_2^2 
   + \sum_{i=1}^n C_i \xi_i, }  \label{eq:svm.primal} \\
   {\rm s.t. }        ~~& y_i f(\bm{x}_i) \geq 1 - \xi_i, 
   \  \xi_i \geq 0, \ i = 1, \ldots, n, \nonumber
\end{align}
where 
$\{ C_i \}_{i = 1}^n$ 
denotes regularization parameters.
This formulation reduces to the standard formulation of the SVM when 
 all $C_i$'s are the same.
Our discussion in this paper holds for arbitrary choice of 
$C_i$'s.

We formulate the dual problem of (\ref{eq:svm.primal}) as:
\begin{align}
\begin{aligned}
  \max_{ \valpha}  & \ 
  - \frac{1}{2} \valpha^\top \vQ \valpha
   + \mb{1}^\top \valpha  \\ 
 {\rm s.t. } & \
   \vy^\top \valpha = 0, \ 
   \mb{0} \leq \valpha \leq \mb{c},
\end{aligned}
 \label{eq:svm.dual}
\end{align}
where 
$\valpha = [\alpha_1, \ldots, \alpha_n]^\top$,
$\vc = [C_1, \ldots, C_n]^\top$
 and $(i,j)$ element of
$\vQ \in \RR^{n \times n}$
is
$Q_{ij} = y_i y_j \Phi(\vx_i)^\top \Phi(\vx_j)$.
Note that, we use 
 inequalities between vectors as the element-wise inequality
(i.e., 
$\valpha \leq \vc$ 
$\Leftrightarrow$
$\alpha_i \leq C_i$ for $i = 1, \ldots, n$
).
Using kernel function 
$K(\vx_i, \vx_j) = \Phi(\vx_i)^\top \Phi(\vx_j)$, discriminant function
$f$ is represented as:
\begin{eqnarray*}
f(\vx) = \sum_{i = 1}^n \alpha_i y_i K(\vx, \vx_i) + \alpha_0.
\end{eqnarray*}

In what follows, the subscript by an index set such as 
$\mb{v}_\cI$ for a vector 
$\mb{v} = [v_1, \cdots, v_n]^\top$
indicates a
sub-vector of $\mb{v}$ whose elements are indexed by 
$\cI = \{ i_1, \ldots, i_{|\cI|}\}$.
For example, for 
$\mb{v} = [a, b, c]^\top$ 
and 
$\cI = \{1,3\}$, 
$\mb{v}_{\cal I}=[a, c]^\top$.
 Similarly, the subscript by two index sets such as 
$\vM_{\cI_1, \cI_2}$
for a matrix 
$\vM \in \RR^{n \times n}$
denotes a sub-matrix whose rows and columns are indexed by 
$\cI_1$
and 
$\cI_2$, respectively.
The principal sub-matrix such as 
$\vM_{\cI, \cI}$
is abbreviated as $\vM_\cI$.

\subsection{Solution Path Algorithm for SVM}

In this paper, we consider the solution path with respect to the
regularization
parameter vector $\vc$.
To follow the path, 
we parametrized $\vc$ in the following form:
\begin{eqnarray*}
 \vc^{(\theta)} = \vc^{(0)} + \theta \vd,
\end{eqnarray*}
where 
$\mb{c}^{(0)} = [C^{(0)}_1, \ldots, C^{(0)}_n]^\top$
is some initial parameter,
$\mb{d} = [d_1, \ldots, d_n]^\top$
is a direction of the path and 
$\theta \geq 0$.
We trace the change of the optimal solution of the SVM when 
$\theta$ increases from $0$.

Let 
$\{ \alpha_i^{(\theta)} \}_{i = 0}^n$ 
be the optimal parameters
and 
$\{ f_i^{(\theta)} \}_{i = 1}^n$ 
be the outputs $f(\vx_i)$ at $\theta$.
The KKT optimality conditions are summarized as:
\begin{subequations}
 \begin{eqnarray} 
  y_i f^{(\theta)}_i \geq 1, & {\rm if} &  \alpha^{(\theta)}_i = 0, 
  \label{eq:yf.gt.1} \\
  y_i f^{(\theta)}_i = 1, & {\rm if} &  
   0 < \alpha^{(\theta)}_i < C_i^{(\theta)},
  \label{eq:yf.eq.1} \\
  y_i f^{(\theta)}_i \leq 1, & {\rm if} &  \alpha^{(\theta)}_i = C^{(\theta)}_i,
  \label{eq:yf.lt.1} \\
   \vy^\top \valpha = 0. &&
  \label{eq:const.eq.dual}
 \end{eqnarray}
\label{eq:kkt}
\end{subequations}
We separate data points into three index sets 
$\cM, \cO, \cI \subseteq \{ 1, \ldots, n \}$
in such a way that these sets 
satisfy
\begin{subequations}
 \begin{eqnarray}
  i \in \cO &\Rightarrow&
  y_i f^{(\theta)}_i \geq 1, \alpha^{(\theta)}_i = 0, 
  \label{eq:idx.outside}  \\
  i \in \cM &\Rightarrow&
  y_i f^{(\theta)}_i = 1, \alpha^{(\theta)}_i \in [0, C_i],
  \label{eq:idx.margin}  \\
  i \in \cI &\Rightarrow&
  y_i f^{(\theta)}_i \leq 1, \alpha^{(\theta)}_i = C_i,
  \label{eq:idx.inside} 
 \end{eqnarray}
 \label{eq:idx.sets}
\end{subequations}
and we denote these partitions altogether as $\pi := (\cO, \cM, \cI)$.
If every data point belongs to one of the three index sets and
equality (\ref{eq:const.eq.dual}) holds, 
the KKT conditions (\ref{eq:kkt}) are satisfied.
As long as these index sets are unchanged, 
we have analytical expression of the optimal solution in the form
of 
$\alpha_i^{(\theta + \Delta \theta)} = \alpha_i^{(\theta)} + \Delta \theta \beta_i$, 
$i = 0, \ldots, n$,
where $\Delta \theta$ is the change of $\theta$ and
$\{ \beta_i \}_{i = 0}^n$ 
are constants derived from sensitivity
analysis theory:

\begin{thm} \label{thm:}
Let 
$\pi = (\cO, \cM, \cI)$ 
be the partition at the optimal solution at 
$\theta$
and assume that
\begin{eqnarray*}
\vM =
  \begin{bmatrix}
   0 & \vy_{\cM}^\top \\
   \vy_{\cM} & \vQ_{\cM} 
  \end{bmatrix}
\end{eqnarray*}
is non-singular\footnote{%
The invertibility of the matrix $\vM$ is assured if and only if 
the submatrix $\vQ_\cM$ is positive definite in subspace
$\{\vz \in \RR^{|\cM|} \mid \vy_\cM^\top \vz = 0 \}$.}.
Then, as long as $\pi$ is unchanged,
$\{ \beta_i \}_{i = 0}^n$ is given by
\begin{align}
 \begin{bmatrix}
  \beta_0 \\
  \vbeta_\cM
 \end{bmatrix} 
\! = \!
 - \vM^{-1}
 \begin{bmatrix}
  \vy_{\cal I}^\top \\
  \vQ_{{\cal M},{\cal I}} 
 \end{bmatrix}
 \mb{d}_\cI, \  
 \vbeta_\cO = \bm{0}, \  
 \vbeta_\cI = \mb{d}_\cI.  
\label{eq:beta}
\end{align}
\end{thm}
The proof is in \red{Appendix \ref{sec:proof_thm1}}.
This theorem can be viewed as one of the specific forms of the
sensitivity theorem \cite{Fiacco76}.
It can be derived from the KKT conditions 
(\ref{eq:kkt}) and the similar properties are repeatedly used in 
various solution path algorithms in machine learning
\citep{Cauwenberghs01, Hastie04}.

Using the above theorem, we can update the solution by 
$\alpha_i^{(\theta + \Delta \theta)} = \alpha_i^{(\theta)} + \Delta \theta \beta_i$
as long as $\pi$ is unchanged.
However, if we changes $\theta$, the optimal partition $\pi$ could also
 changes.
Those change points are called \emph{breakpoints}.
In the solution path algorithm, the optimality conditions 
are always kept satisfied by precisely
 detecting the breakpoints and updating $\pi$ properly.

\section{Suboptimal Solution Path}
\label{sec:subopt_path}

In this section, we develop a suboptimal solution path algorithm for the
SVM, where the tolerance to the optimality conditions can be arbitrary
controlled by users.
The basic idea is to relax the KKT optimality conditions and allow
multiple data points to move among the partition $\pi$ at the same time.
Note that it reduces the number of breakpoints and leads to the
improvement in its computational efficiency: 
allowing us to control the balance between the accuracy of the solution
and the computational cost.

\subsection{Approximate Optimality Conditions}

First, we relax the conditions (\ref{eq:idx.sets}) as
\begin{subequations}
 \begin{align}
  i \in \cO & \Rightarrow 
  y_i f^{(\theta)}_i \geq 1 - \varepsilon_1, 
  \alpha^{(\theta)}_i \in [-\varepsilon_2, 0], 
  \label{eq:app.idx.outside} \\
  i \in \cM & \Rightarrow   
  y_i f^{(\theta)}_i \in [1 \! - \! \varepsilon_1, 1 \! + \! \varepsilon_1], 
  \alpha^{(\theta)}_i \in [- \varepsilon_2, C_i^{(\theta)} \! + \! \varepsilon_2],
  \label{eq:app.idx.margin} \\
  i \in \cI & \Rightarrow 
  y_i f^{(\theta)}_i \leq 1 \! + \! \varepsilon_1, 
  \alpha^{(\theta)}_i \in [C^{(\theta)}_i, C^{(\theta)}_i \! + \! \varepsilon_2],
  \label{eq:app.idx.inside} 
 \end{align}
  \label{eq:app.idx.sets}
\end{subequations}
where 
 $\varepsilon_1 \geq 0$ and $\varepsilon_2 \geq 0$ specify the degree of
 approximation.
If we set $\varepsilon_1 = \varepsilon_2 = 0$, these conditions reduce
 to (\ref{eq:idx.sets}).

Our algorithm changes $\theta$ while keeping the above conditions 
(\ref{eq:app.idx.sets}) satisfied.
Let $\theta_0 = 0$ be the initial value of $\theta$ and
the non-decreasing sequence 
$\theta_0 \leq \theta_1 \leq \theta_2 \leq \ldots$,
be the breakpoints.
Suppose we are currently at $\theta_k$, the next breakpoint 
$\theta_{k + 1}$ is characterized as the point that we can not increase 
$\theta$ without violating the conditions (\ref{eq:app.idx.sets})
or changing index sets $\pi$.

If we set $\{ \beta_i \}_{i = 0}^n$ by (\ref{eq:beta}), then
$y_i f_i^{(\theta)}$, 
$i \in \cM$,
and 
$\alpha^{(\theta)}_i$, 
$i \in \cO \cup \cI$,
are constants.
To increase $\theta$ from $\theta_k$,
we only need to check the following inequalities:
\begin{align*}
 \begin{array}{rcll}
 y_i f_i^{(\theta_k)} + \Delta \theta g_i &\geq& 1 - \varepsilon_1, &
  \ i \in \cO, \\
 \alpha_i^{(\theta_k)} + \Delta \theta \beta_i 
  &\in& [-\varepsilon_2, C_i^{(\theta_k)} + \varepsilon_2], &
  \ i \in \cM, \\
 y_i f_i^{(\theta_k)} + \Delta \theta g_i &\leq& 1 - \varepsilon_1, &
  \ i \in \cI,
 \end{array}
\end{align*}
where $g_i$ is 
the change of output $y_i f_i$ which is 
defined by
 $\vg = \vQ \vbeta + \vy \beta_0$.
We want to know the maximum $\Delta \theta$ which satisfies all of the above inequalities.
We can easily calculate the maximum $\Delta \theta$ for each
inequality as follows:
\begin{eqnarray*}
\begin{array}{rcl}
 \Theta_{\cO} &=& 
  \Bigl\{
   (1 - \varepsilon_1 - y_i f_i^{(\theta_k)}) / g_i
   \Big|
   i \in \cO, g_i < 0 
  \Bigr\}, \\
 \Theta_{\cM_\ell} &=& 
  \left\{
  \left.
   - (\alpha_i^{(\theta_k)} + \varepsilon_2) / \beta_i
  \right|
  i \in \cM, \beta_i < 0
 \right\}, \\
 \Theta_{\cM_u} &=& 
  \Bigl\{
   (C^{(\theta_k)}_i + \varepsilon_2 - \alpha_i^{(\theta_k)}) / (\beta_i - d_i) 
   \\  && \hspace{8em} 
   \Big|  i \in \cM, \beta_i > d_i
   \Bigr\}, \\
 \Theta_{\cI} &=& 
  \Bigl\{
   (1 + \varepsilon_1 - y_i f_i^{(\theta_k)}) / g_i
  \Big|
  i \in \cI, g_i > 0 
 \Bigr\}, 
\end{array}
\end{eqnarray*}
Since we have to keep all of the inequalities satisfied,
we take the minimum of these values:
$\Delta \theta = \min \Theta$,
where 
$\Theta = \{ \Theta_\cO, \Theta_{\cM_\ell}, \Theta_{\cM_u}, \Theta_\cI \}$.
Then we can find $\theta_{k + 1} = \theta_k + \Delta \theta$.

Although we detect $\theta_{k + 1}$,
it is necessary to update $\pi$ to go beyond the breakpoint.
Conventional solution path algorithms 
allow only one data point to move
between
the partition $\pi$ at each breakpoint.
For example, $\alpha_i$, $i \in \cM$, reaches $0$, 
the algorithm
transfers
the index $i$ 
from $\cM$ to $\cO$
(\figurename~\ref{fig:bp.exact}).
In our algorithm, multiple data points are allowed to move between 
the partitions $\pi$ at the same time in order to reduce the number of breakpoints.

\subsection{Update Index Sets}

\begin{figure}[t]
\begin{center}
 \subfigure[Exact path]{
 \includegraphics[width=1.5in]{./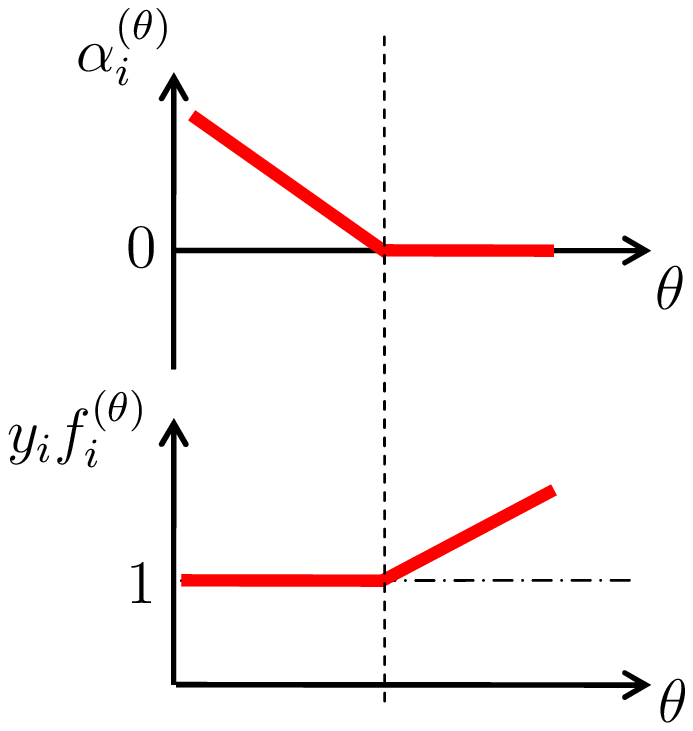}
 \label{fig:bp.exact}
 }
 \subfigure[Suboptimal path]{
 \includegraphics[width=1.5in]{./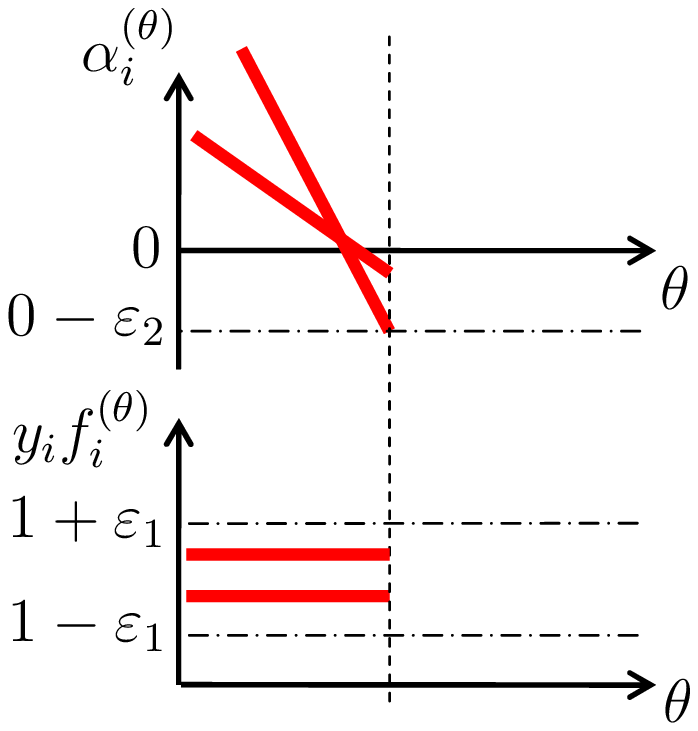}
 \label{fig:bp.app}
 }
 \caption{%
 An illustrative example of the breakpoint. 
 The points of the vertical dashed lines are breakpoints.
 (a)
 At the breakpoint in the upper plot, 
 $\alpha_i, i \in \cM,$ reaches $0$.
 Since the index 
 $i$ is transferred from $\cM$ to $\cO$,
 $\alpha_i = 0$ on the right side of the vertical line.
 In the lower plot, 
 $y_i f_i^{(\theta)} = 1$ 
 on the left side of the vertical line and
 $y_i f_i^{(\theta)} \geq 1$ on the right side of the vertical
 line.
 At the breakpoint, the data point $i$ satisfies the both of the
 optimality conditions (\ref{eq:idx.margin}) and (\ref{eq:idx.outside})
 for $\cM$ and $\cO$, respectively.
 (b)
 At the breakpoint in the upper plot,
 one of $\alpha_i, i \in \cM,$ reaches $- \varepsilon_2$.
 In the lower plot, both of the two lines are in 
 $[1-\varepsilon_1, 1+\varepsilon_1]$.
 In this case, 
 these two points 
 satisfy the both of the
 optimality conditions (\ref{eq:app.idx.margin}) and (\ref{eq:app.idx.outside})
 for $\cM$ and $\cO$, respectively.
 It does not necessarily mean that these two data points should move to
 $\cO$:
 either of them have a possibility to stay in $\cM$ even after the
 breakpoint.
 This situation is called degeneracy in parametric programming literature.
 }
\end{center}
\end{figure} 

At a breakpoint,
our algorithm handles all the data points that violate the strict
inequality conditions (\ref{eq:idx.sets}) rather than the relaxed ones 
(\ref{eq:app.idx.sets}) (\figurename~\ref{fig:bp.app}).
This situation can be interpreted as
what is called \emph{degeneracy} in the
parametric programming \citep{Ritter84}.
Here, degeneracy means that multiple constraints 
hit their
boundaries of inequalities simultaneously.
Although degenerate situation rarely happens in conventional solution
path algorithms, it is not the case in ours.
The simultaneous change of multiple data points inevitably brings about 
``highly'' degenerate situations involved with many constraints.
In degenerate case, we have a problem called the \emph{cycling}.
For example, if we move 
two indices $i$ and $j$ from $\cM$ to $\cO$ at the breakpoint,
then both or either of them may immediately return to $\cM$.
To avoid the cycling, we need to design an update strategy for $\pi$
that can circumvent cycling.

The degeneracy can be handled by several approaches 
which are known in the parametric
programming literature.
\citet{Ritter84} showed that the cycling can be dealt with through the 
well-known Bland's minimum index rule in the linear programming
\citep{Bland77}.
However, in the worst case, 
this approach must go through all the possible patterns of next $\pi$.
Since we need to evaluate $\{\beta_i\}_{i = 0}^n$ 
in each iteration, a large number of iterations may cause additional
computational cost.
In this paper, we provide more essential solution to this problem based on 
\citep{Berkelaar97}.

Suppose we are currently on the breakpoint $\theta_k$.
Let 
\begin{eqnarray*}
 \begin{array}{rcl}
 \cB_\cO &=&
  \{ i \mid \alpha_i^{(\theta_k)} \leq 0, \beta_i < 0, i \in \cM  \} 
  \cup \\
  && \{ i \mid y_i f_i^{(\theta_k)} \leq 1, g_i < 0, i \in \cO  \}, \\
 \cB_\cI &=&
  \{ i \mid \alpha_i^{(\theta_k)} \geq C_i^{(\theta_k)}, \beta_i > d_i, i \in \cM  \} 
  \cup \\
  && \{ i \mid y_i f_i^{(\theta_k)} \geq 1, g_i > 0, i \in \cI  \}.
 \end{array}
\end{eqnarray*}
$\cB_\cO$ is the 
set of indices which satisfy the conditions 
(\ref{eq:app.idx.outside}) and (\ref{eq:app.idx.margin}) 
for
being the member of $\cM$ and $\cO$ simultaneously 
at $\theta_k$.
Similarly, indices in $\cB_\cI$ 
satisfy the conditions 
(\ref{eq:app.idx.margin}) and (\ref{eq:app.idx.inside}) 
for being the member 
of $\cM$ and
$\cI$ at $\theta_k$.
Moreover, let us define sum of these two sets as
\begin{eqnarray*}
\cB = \cB_\cO \cup \cB_\cI.
\end{eqnarray*}
Our task is to partition these indices to 
$\cO$, $\cM$ and $\cI$ correctly so that it does not cause the cycling.

In our formulation, due to the approximation by 
$\varepsilon_1$ and $\varepsilon_2$,
the cycling may not occur at $\Delta \theta = 0$ immediately.
For example, suppose that $i$ move to $\cM$ from $\cO$ and its parameter is
$\alpha_i = 0$.
In the next iteration, we need to check 
$\alpha_i + \Delta \theta \beta_i \geq - \varepsilon_2$.
If $\beta_i < 0$, then we obtain
$\Delta \theta \leq -\varepsilon_2/\beta_i > 0$.
Although it allows 
$\Delta \theta > 0$, 
the index $i$ may return
back to $\cO$.
This situation can also be considered as cycling.

Let 
$\pi_{k} = (\cO_{k}, \cM_{k}, \cI_{k})$ 
be $\pi$ in 
$[\theta_k, \theta_{k+1}]$.
At 
$\theta_{k+1}$,
if and only if
the cycling does not occur, it can be shown that
the following conditions hold:
\begin{subequations}
\begin{eqnarray}
  \beta_i \geq 0, g_i = 0, && \text{ for } i \in \cM_{k+1} \cap \cB_\cO, 
   \label{eq:to.M.from.O} \\
  \beta_i = 0, g_i \geq 0, && \text{ for } i \in \cO_{k+1} \cap \cB_\cO, 
   \label{eq:to.O.from.O} \\
  \beta_i \leq d_i, g_i = 0, && \text{ for } i \in \cM_{k+1} \cap \cB_\cI, 
   \label{eq:to.M.from.I} \\
  \beta_i = d_i, g_i \leq 0, && \text{ for } i \in \cI_{k+1} \cap \cB_\cI. 
   \label{eq:to.I.from.I} 
\end{eqnarray}
\label{eq:cycling} 
\end{subequations}
Although $\beta_i$ and $g_i$ are usually calculated using $\pi$,
our approach allows us to calculate $\beta_i$ and $g_i$ 
 without knowing $\pi$ so that they can satisfy the above
conditions.
If the gradient $\vbeta$, 
which is defined in (\ref{eq:beta}), 
satisfies the following conditions,
we can find the next partition $\pi_{k+1}$ to satisfy
(\ref{eq:cycling}).
The conditions are:
 \begin{align}
 \begin{aligned}
  g_i \beta_i = 0, \
  g_i \geq 0, \
  \beta_i \geq 0, 
  \ i \in \cB_\cO,  &
  \\
  g_i (d_i - \beta_i)  = 0, \
  g_i \leq 0,  \
  \beta_i \leq d_i,  
  \ i \in \cB_\cI,  &
  \\
 \end{aligned} 
 \label{eq:grad.cond}
 \end{align} 
If we know such $\vbeta$ and $\vg$,
using the following update rule, we can determine $\pi_{k+1}$ as: 
\begin{align}
\begin{array}{rl}
 \cM_k = &  \cM_{k+\frac{1}{2}}  \cup  
 \{ i \mid \beta_i > 0, g_i = 0, i \in \cB_\cO \} \\ 
 & \cup \ 
 \{ i \mid \beta_i < d_i, g_i = 0, i \in \cB_\cI \}, \\
 \cO_k = &  \cO_{k+\frac{1}{2}}  \cup  
 \{ i \mid \beta_i = 0, g_i \geq 0, i \in \cB_\cO \}, \\
 \cI_k = &  \cI_{k+\frac{1}{2}}  \cup  
 \{ i \mid \beta_i = d_i, g_i \leq 0, i \in \cB_\cI \},
\end{array}
 \label{eq:update.pi}
\end{align}
where
$\cO_{k+\frac{1}{2}} = \cO_{k} \setminus \cB$,
 $\cM_{k+\frac{1}{2}} = \cM_{k} \setminus \cB$ and
 $\cI_{k+\frac{1}{2}} = \cI_{k} \setminus \cB$.

\begin{rmk}
 By definition, the update rule (\ref{eq:update.pi}) guarantees that
 the non-cycling conditions (\ref{eq:cycling}) hold.
\end{rmk}



To use (\ref{eq:update.pi}),
we need
$\vbeta$ (\ref{eq:beta}) which satisfies (\ref{eq:grad.cond}).
The following theorem 
shows that it can be obtained from a quadratic programming
problem (QP): 

\begin{thm} \label{thm:}
Let 
$\whbeta_0$,
$\whvbeta$ and
$\whvg$ 
be the optimal solutions of the following QP problem:
\begin{align}
& 
\min_{\whbeta_0, \whvbeta, \whvg} 
  \sum_{i \in \cB_\cO} \whg_i \whbeta_i + 
  \sum_{i \in \cB_\cI} \whg_i (\whbeta_i - d_i) 
  \label{eq:partition.qp} \\
& {\rm s.t.}  \left\{ 
 \begin{aligned}
  & \whvg_{\cB_\cO} \geq \bm{0}, \ \whvbeta_{\cB_\cO} \geq \bm{0}, \ 
   \whvg_{\cB_\cI} \leq \bm{0}, \ \whvbeta_{\cB_\cI} \leq \mb{d}_\cI, \\
  & \whvg_{\cM_{k+\frac{1}{2}}} = \bm{0}, \
    \whvbeta_{\cO_{k+\frac{1}{2}}} = \bm{0}, \
    \whvbeta_{\cI_{k+\frac{1}{2}}} = \mb{d}_{\cI_{k+\frac{1}{2}}}, \nonumber \\
  & \vy^\top \whvbeta = 0, \ 
   \whvg = \vQ \whvbeta + \vy \whbeta_0, \nonumber 
 \end{aligned}
 \right. 
\end{align}
and $\pi$ is determined by (\ref{eq:update.pi}) using 
$\whvbeta$ and $\whvg$.
Then
$\whbeta_0$,
$\whvbeta$ and
$\whvg$ 
satisfy (\ref{eq:grad.cond}) and they are equal to the gradient 
$\beta_0$,
$\vbeta$ and
$\vg$, respectively.
\end{thm}

Although the detailed proof is in \red{Appendix}, 
we can provide clear interpretation of this optimization problem.
The objective function and inequality constraints corresponds to 
 (\ref{eq:grad.cond})
and the other constraints
correspond to the linear system (\ref{eq:beta}).
It can be shown that the optimal value of the objective function is $0$.
Given the non-negativity of each term in the objective,
we see that (\ref{eq:grad.cond}) holds
(see \red{Appendix \ref{sec:proof_thm2}} for detail).

The optimization problem (\ref{eq:partition.qp}) has 
$2n + 1$ variables and $2|\cB| + 2n + 1$ constraints.
However, we can reduce these sizes to $|\cB|$ variables and $2|\cB|$
constraints
by arranging the equality
constraints\footnote{%

In the case of $|\cM_{k+\frac{1}{2}}| = 0$, the reduced problem has 
$|\cB| + 1$ variables $2 |\cB| + 1$ constraints.}.
The detailed formulation of the reduced problem is in \red{Appendix \ref{sec:reduced_QP}}.
If the size of $|\cB|$ is large, 
it may take large computational cost to
solve (\ref{eq:partition.qp}).
To avoid this, we set the upper bound $B$ for the number of elements of
$\cB$. 
In the case of $|\cB| > B$, we choose 
 top $B$ elements from the original $\cB$ 
by increasing order of $\Theta$
as the elements of $\cB$.

\subsection{Algorithm and Computational Complexity}

Here, we summarize our algorithm and analyze its computational
complexity.
At the $k$-th breakpoint, 
our algorithm performs the following procedure:
\begin{description}

 \item[step1] Using $\pi_k$, calculate $\beta_0, \vbeta$ and $\vg$ by (\ref{eq:beta})

 \item[step2] Calculate the next breakpoint $\theta_{k+1}$ and update
	    $\alpha^{(\theta)}_0, \valpha^{(\theta)}, \vc^{(\theta)}$;

 \item[step3] Solve (\ref{eq:partition.qp}) and calculate $\pi_{k+1}$ by (\ref{eq:update.pi})

\end{description}

In step1, we need to solve the linear system (\ref{eq:beta}).
In conventional solution path algorithms, we can update it using
rank-one-update of an inverse matrix or a Cholesky factor from previous
iteration by $O(|\cM|^2)$ computations.
In our case, we need rank-$m$-update at each breakpoint, where 
$1 \leq m \leq B$.
When we set $B$ as some small constant,
the computational cost still remains 
$O(|\cM|^2)$.
Including the other processes in this step,
the computational cost becomes 
$O(n |\cM|)$.
In step2, 
given $\vbeta$ and $\vg$,
we can calculate all the possible step length
$\Theta$ by $O(n)$.
In step3, 
since the optimization problem (\ref{eq:partition.qp}) becomes convex
QP problem with $|\cB|$ variables,
it can be solved efficiently by some standard QP solvers
in the situation $|\cB|$ is relatively small compared to $n$.
When we set $B$ as some constant, 
the time for solving this optimization problem is then independent of 
$n$.

Put it all together, in the case of constant $B$,
the computational cost of each breakpoint is 
$O(n |\cM|)$.
This is the same as the conventional solution path algorithm.
However, 
as we will see later in experiments,
 our algorithm drastically reduces the number of breakpoints especially
 when we use large $\varepsilon_1$ and
$\varepsilon_2$.

\section{Analysis}
\label{sec:analysis}

In this section, we provide some theoretical analyses of our suboptimal solution
path.

\subsection{Interpretation as Perturbed Problem}

An interesting property of our approach is 
that the solutions 
always keep the optimality of an optimization problem 
which is slightly perturbed 
from the original one.
The following theorem gives the formulation of the 
perturbed problem:
\begin{thm} \label{thm:}
 Every solution 
 $\valpha^{(\theta)}$ 
 in the suboptimal solution path 
 is the optimal solution of the following optimization problem:
 \begin{eqnarray}
  \begin{split}
    \max_{ \valpha }  ~~&
   - \frac{1}{2} \valpha^\top \vQ \valpha
   + (\mb{1} + \mb{p})^\top \valpha  \\
   {\rm s.t. } ~~&
   \textstyle{
   \vy^\top \valpha = 0, \ 
   - \mb{q} \leq \valpha \leq \mb{c}^{(\theta)} + \mb{q}.}
  \end{split}
  \label{eq:prtb.dual}
 \end{eqnarray} 
 where perturbation parameters 
 $\vp, \vq \in \RR^n$
 are in
 $-\varepsilon_1 \mb{1} \leq \vp \leq \varepsilon_1 \mb{1}$
 and
 $\mb{0} \leq \vq \leq \varepsilon_2 \mb{1}$,
 respectively.
\end{thm}

\begin{proof}
 Let 
 $\vxi^+, \vxi^- \in \RR^n_+$ and
 $\kappa \in \RR$ be the Lagrange multipliers.
 The Lagrangian is
 \begin{eqnarray*}
 L &=& 
  \textstyle{
  - \frac{1}{2} \valpha^\top \vQ \valpha
  + (\mb{1} + \mb{p})^\top \valpha 
  } \\
 && 
  \textstyle{
  + (\valpha + \mb{q})^\top \mb{\xi}^- 
  + (\mb{c}^{(\theta)} + \mb{q} - \valpha)^\top \mb{\xi}^+
  + \kappa \vy^\top \valpha,
  }
\end{eqnarray*}
 and the KKT conditions are
\begin{subequations}
 \begin{eqnarray}
  \textstyle{ \pd{L}{\valpha} = 
  - \vQ \valpha + \mb{1} + \mb{p}
  + \mb{\xi}^- - \mb{\xi}^+
  + \kappa \vy = \mb{0}, }
  \label{eq:prtb.dL}\\
  \mb{\xi}^+, \mb{\xi}^- \geq 0, 
   \label{eq:prtb.sign.xi} \\
  \xi_i^- (\alpha_i + q_i) = 0, \ i = 1, \ldots, n, 
   \label{eq:prtb.compl1}\\
  \xi_i^+ (C_i^{(\theta)} + q_i - \alpha_i) = 0, \ i = 1, \ldots, n,
   \label{eq:prtb.compl2} \\
  - \mb{q} \leq \valpha \leq \mb{c}^{(\theta)} + \mb{q}. 
   \label{eq:prtb.box} \\ 
  \vy^\top \valpha = 0, 
   \label{eq:prtb.eq}  
 \end{eqnarray}
 \label{eq:prtb.kkt}
\end{subequations}
Substituting 
$\valpha = \valpha^{(\theta)}$ and $\kappa = - \alpha^{(\theta)}_0$,
$i$-th element of (\ref{eq:prtb.dL}) can be written as 
$y_i f_i^{(\theta)} = 1 + p_i + \xi_i^- - \xi_i^+$.
Considering this and the conditions of suboptimal solution 
$\valpha^{(\theta)}$ (\ref{eq:app.idx.sets}), 
 there exist
 $p_i \in [-\varepsilon_1, \varepsilon_1]$ and 
 $\xi^\pm_i$
 which satisfy 
 $\xi^+_i = \xi^-_i = 0$ for $i \in \cM$, 
 $\xi^+_i = 0, \  \xi^-_i \geq 0$, for $i \in \cO$ and 
 $\xi^+_i \geq 0, \  \xi^-_i = 0$, for $i \in \cI$.
These $\xi_i^{\pm}$'s satisfy the non-negativity constraint 
(\ref{eq:prtb.sign.xi}).

The complementary conditions 
(\ref{eq:prtb.compl1}) and (\ref{eq:prtb.compl2})
for $i \in \cM$ hold from $\xi_i^+ = \xi_i^- = 0$.
For $i \in \cO$, 
since $\xi_i^+ = 0$, 
we don't have to check (\ref{eq:prtb.compl2}).
In this case, 
if we set 
$q_i = -\alpha^{(\theta)}_i \in [0, \varepsilon_2]$,
then (\ref{eq:prtb.compl1}) holds.
It can be shown in a similar way that
(\ref{eq:prtb.compl1}) and (\ref{eq:prtb.compl2}) hold
for $i \in \cI$.

Our suboptimal solution path algorithm always satisfies
the equality constraint of the dual (\ref{eq:svm.dual})
and the box constraint (\ref{eq:prtb.box}) 
satisfied. 
Therefore, we see
(\ref{eq:prtb.kkt}) holds.
\end{proof}

The problem (\ref{eq:prtb.dual}) can be interpreted as the dual problem
of the following form of the SVM:
\begin{eqnarray}
 \min_{\mb{w}, \alpha_0} \  \frac{1}{2} \mb{w}^\top \mb{w} + 
  \sum_{i = 1}^n \ell(1 + p_i - y_i f_i), 
  \label{eq:prtb.primal}
\end{eqnarray}
where
\begin{eqnarray*}
 \ell(\xi_i) = 
\left\{
 \begin{array}{ll}
 (C_i^{(\theta)} + q_i) \xi_i, & \mbox{ for } \xi_i \geq 0, \\
  - q_i \xi_i, & \mbox{ for } \xi_i < 0, \\
  \end{array}
\right.
\end{eqnarray*}
is a loss function.
We see that the perturbations present in the loss term.

\subsection{Error Analysis}

We have shown that the solution of the suboptimal solution path can be
interpreted as the optimal solution of the perturbed problem (\ref{eq:prtb.primal}).
Here, we consider how close the optimal solution of the perturbed
problem 
to the solution of the original problem in terms of the optimal
objective value.

Let $D(\valpha)$ and
 $\wtD(\valpha)$
be the dual objective functions of the original 
optimization problem (\ref{eq:svm.dual}) and
the perturbed problem (\ref{eq:prtb.dual}), respectively.
From the affine lower bound of
 $\wtD(\valpha)$,
 we obtain
\begin{align*}
 \wtD(\valpha) \leq
  D(\valpha^*) + \vp^\top \valpha^* +
  ( - \vQ \valpha^* + \mb{1} + \vp)^\top
  (\valpha - \valpha^*),
\end{align*}
where $\valpha^*$ is the optimal solution of the original problem.
Let $\wtvalpha$ be the optimal
 solution of the perturbed problem.
Substituting $\valpha = \wtvalpha$ and adding
$\alpha^*_0 \vy^\top (\wtvalpha - \valpha^*) = 0$
to the right hand side,
we obtain
\begin{align}
 \wtD(\wtvalpha) -
  D(\valpha^*) \leq  \vp^\top \valpha^* + 
  ( \vxi^* + \vp)^\top (\wtvalpha - \valpha^*),
 \label{eq:dual.affine.bound}
\end{align}
where
$\vxi^* = - \vQ \valpha^* - \vy \alpha^*_0 + \mb{1}$.
Note that 
$\vxi^*_\cI \geq \mb{0}$,
$\vxi^*_\cM = \mb{0}$ and
$\vxi^*_\cO \leq \mb{0}$,
where $\cI$, $\cM$ and $\cO$ represent the optimal partition of the original
problem (\ref{eq:svm.dual}).
Here, we define
$\widetilde{\cI} = \{ i \mid \xi^*_i + p_i \geq 0, \ i \in \cI \}$,
$\widetilde{\cO} = \{ i \mid \xi^*_i + p_i \leq 0, \ i \in \cO \}$ and
$\widetilde{\cM} = \{1, \ldots, n \} \setminus (\widetilde{\cO} \cup \widetilde{\cI})$.
From the right hand side of (\ref{eq:dual.affine.bound}), we obtain
\begin{align*}
  \wtD(\wtvalpha) - & D(\valpha^*) 
  \leq 
 \textstyle{
 \sum_{i \in \cM \cup \cI} |p_i| \ C_i^{(\theta)} + }
  \\
& 
 \textstyle{
\sum_{i \in \widetilde{\cI} \cup \widetilde{\cO}} 
  | \xi_i^* + p_i | \ q_i 
  + \sum_{i \in \widetilde{\cM} } 
  | p_i | \ (C_i^{(\theta)} + q_i)}
\end{align*}
From the duality theorem, this also bounds  
 the difference of the primal objective value.
Comparing the original objective function (\ref{eq:svm.primal}),
this bound can be considered small when 
$p_i$ and $q_i$ is enough small compared to 
$\xi_i^*$ and $C_i$.
In this view point, this bound gives theoretical justification 
for our intuitive interpretation.
The bound for 
$D(\valpha^*) - \wtD(\wtvalpha)$ can be
also derived in the same manner.

\section{Experiments}
\label{sec:experiment}

In this section, 
we illustrate the empirical performance of the proposed approach 
compare to the conventional exact solution path algorithm.
Our task is to
trace the solution path from
$\vc^{(0)} = 10^{-1} / n \times \mb{1}$ to 
$\vc^{(1)} = 10^{6} / n \times \mb{1}$.
Since all the elements of $\vc^{(\theta)}$ takes the same value in this
case, we sometimes refer to this common value as $C^{(\theta)}$
(i.e., $\vc^{(\theta)} = C^{(\theta)} \times \mb{1}$).
The RBF kernel
$K(\vx_i, \vx_j) = \exp(-\gamma \| \vx_i - \vx_j \|_2^2)$
is used with $\gamma = 1/p$ where 
$p$ is the number of features.
To circumvent possible numerical instability in the solution path,
we add small positive constant $10^{-6}$ to the diagonals of the matrix
$\vQ$.

Let $e \geq 0$ be a parameter which controls the degree of
 approximations.
In this paper, using $e$,
we set $\varepsilon_1$ and $\varepsilon_2$ as
$\varepsilon_1 = e$ and 
$\varepsilon_2 = e \times C^{(\theta_k)}$, respectively,
where $\theta_k$ is the previous breakpoint.
We set $\varepsilon_2$ 
using relative scale to $C^{(\theta_k)}$.

\tablename~\ref{tab:data} lists the statistics of data sets.
These data sets are available from 
LIBSVM site \citep{Chang04} and UCI data repository \citep{Asuncion07}.
We randomly sampled $n$ data points from the original data set $10$
times (we set $n$ be approximately $80$\% of the original number of data
 points in the table).
The input $\vx$ 
of each data set is
linearly scaled to 
$[0, 1]^p$.

\begin{table}[t]
\caption{Data set}
\label{tab:data}
\vskip 0.1in
\begin{center}
\begin{small}
\begin{tabular}{l|cc}
Data set & $n$ & $p$ \\
\hline
internet ad  & 2359 & 1558 \\
spam & 4601 & 57 \\
a5a  & 6414 & 123 \\
w5a  & 9888 & 300 \\
\end{tabular}
\end{small}
\end{center}
\vskip -0.1in
\end{table}

\figurename~\ref{fig:time.bp.vs.eps} shows 
the comparison of the CPU time
and the number of breakpoints.
To make fair comparison, 
the initialization is not included in
the CPU time.
In these results, we set $B = 10$ and we investigated the relationship 
between the computational cost and the degree of approximation by
examining several settings of 
$e \in \{ 0.001, 0.01, 0.1, 0.5 \}$.
The results indicate that our approach can reduce the CPU time
especially when $e$ is large.
The number of breakpoints were also reduced, in the same way as the CPU
time.
In our approach, since we need rank-$m$-update of matrix in each
breakpoint ($1 \leq m \leq B$), 
an update in a breakpoint may take longer time than 
rank-one-update which is needed in the conventional
solution path algorithm. 
We conjecture that this is why the decrease in the number of breakpoints
was slightly faster than the CPU time.
However, since the maximum value of $|\cB|$ was set as $B = 10$ in this experiment,
this additional cost was relatively small compared to the effect of the
reduction of the number of breakpoints.

\begin{figure}[tb]
 \begin{center}
 \subfigure[ad]{
 \includegraphics[width=2in]{./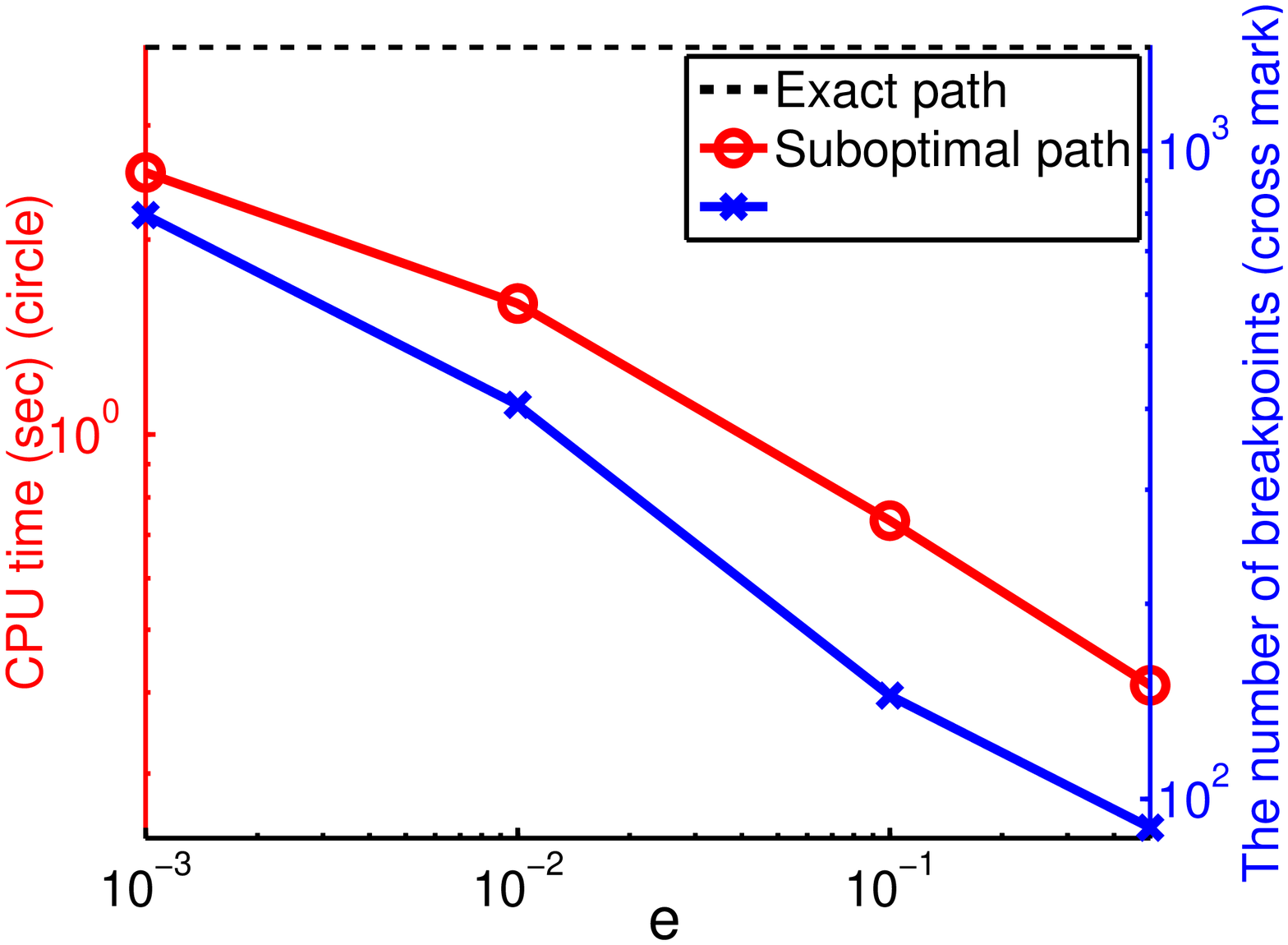}
 }
 \subfigure[spam]{
 \includegraphics[width=2in]{./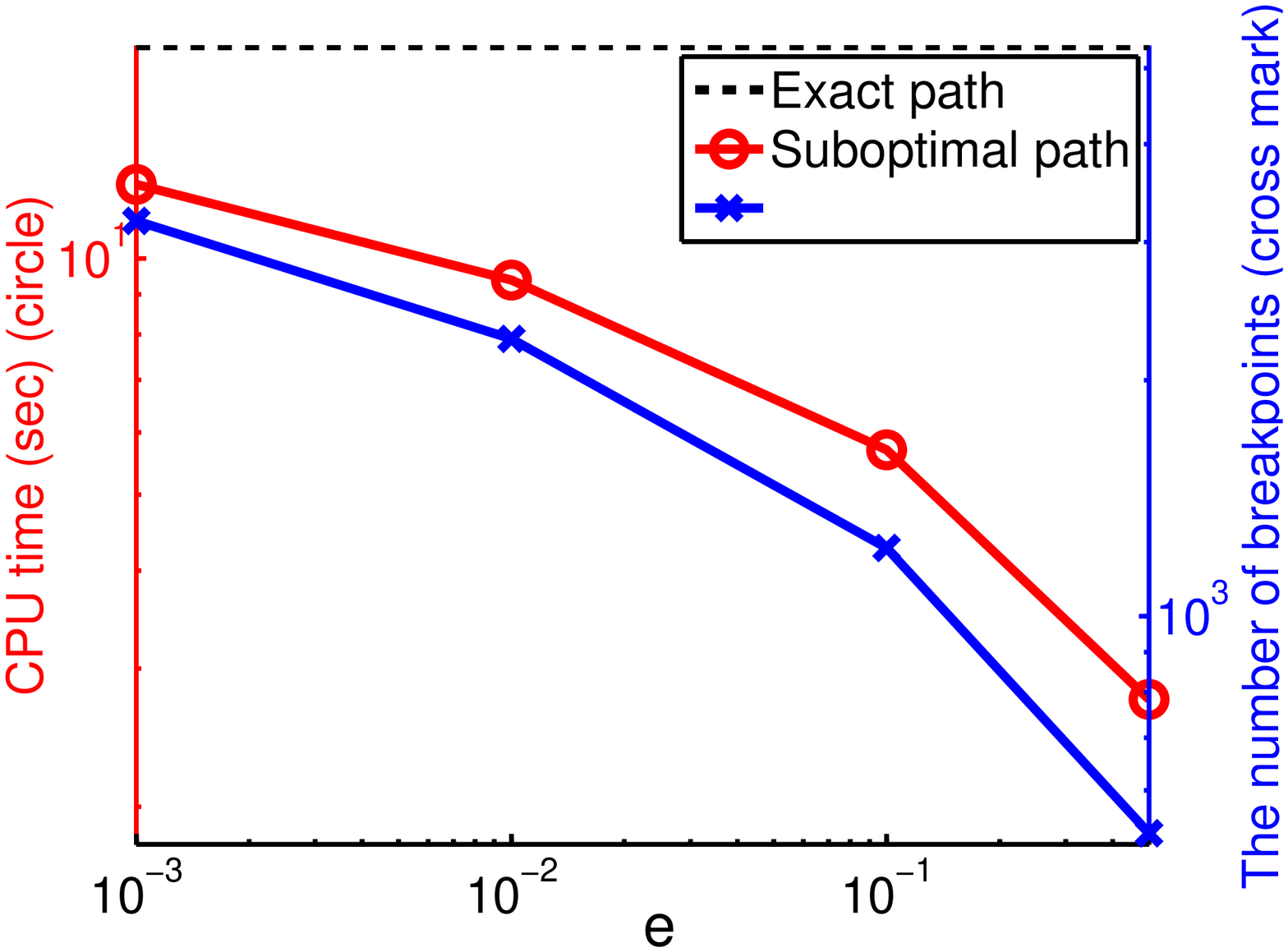}
 }
 \subfigure[a5a]{
 \includegraphics[width=2in]{./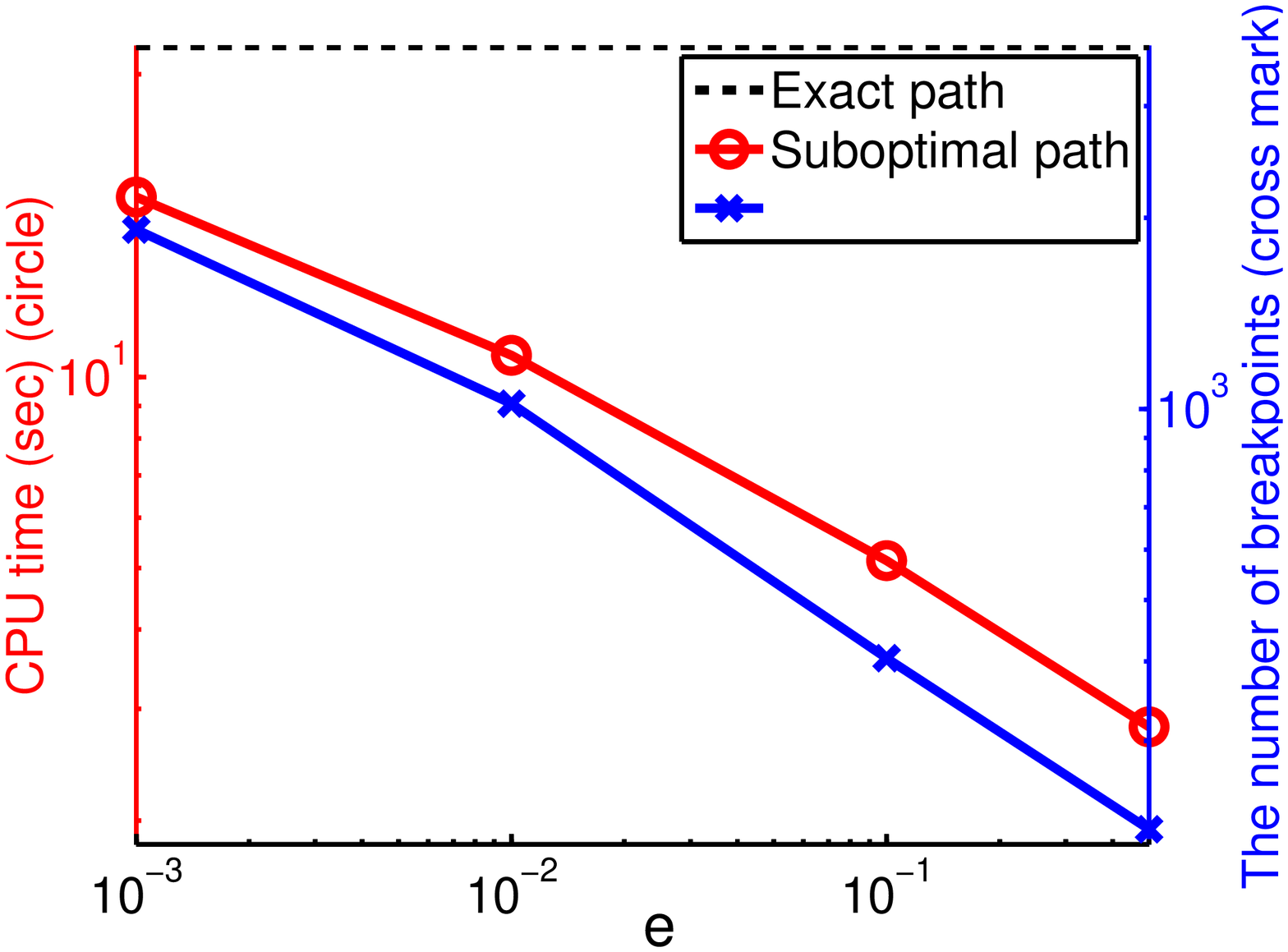}
 }
 \subfigure[w5a]{
 \includegraphics[width=2in]{./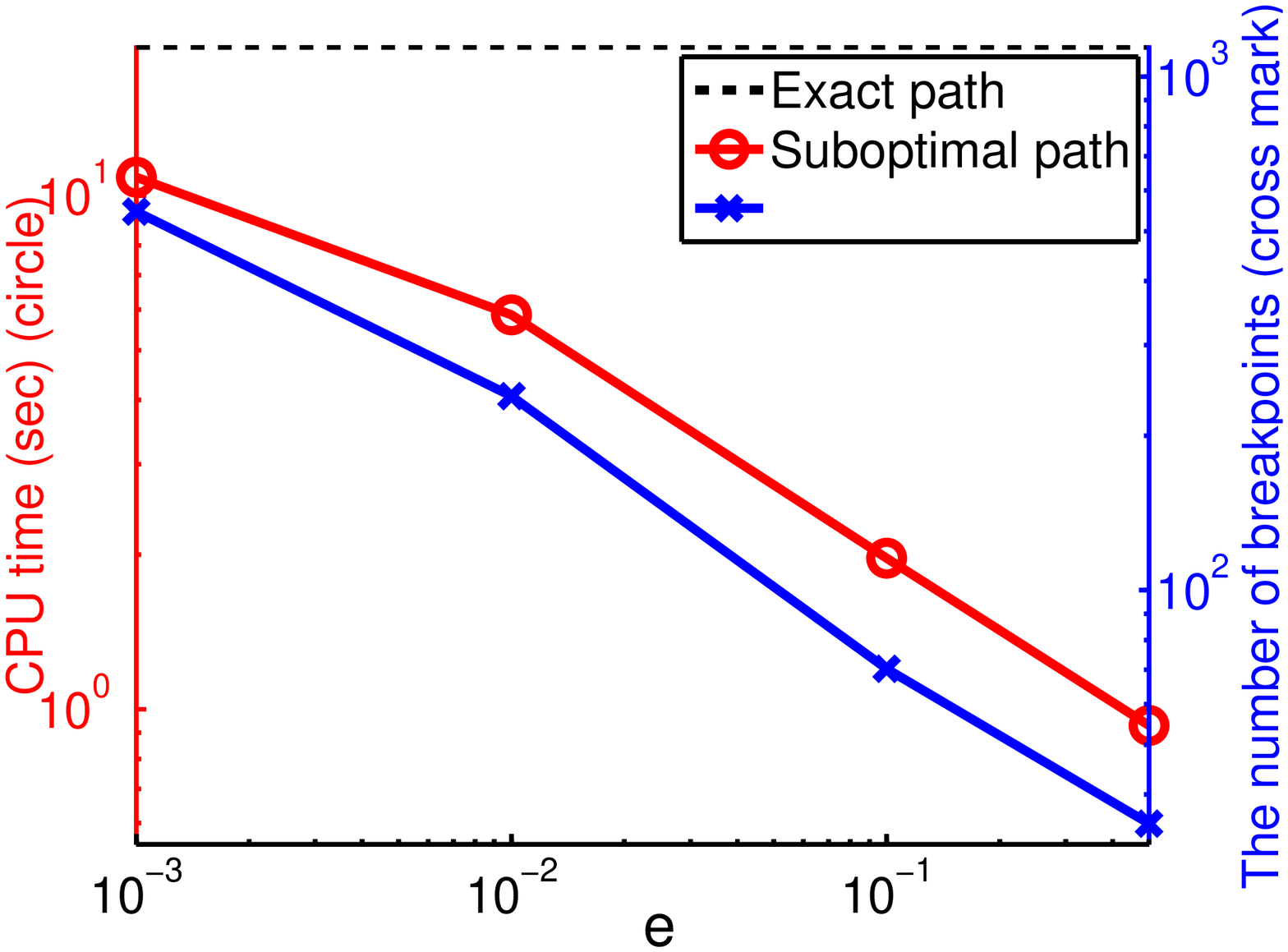}
 }
 \caption{%
 Log plot of CPU time and the number of breakpoints. 
 The horizontal axis of each plot is the degree of the
 approximation.
 The circle denotes the CPU time (left axis) and the cross mark denotes
 the number of breakpoints (right axis) of the suboptimal path.
 The top dashed line of each plot means both of the CPU time and the
 number of breakpoints of the exact path.
 The relative scale of the left and right axes are the same.
 }
 \label{fig:time.bp.vs.eps}
 \end{center}
\end{figure}


Next, we investigated the effect of $B$.
\figurename~\ref{fig:time.bp.vs.B} shows the CPU time and the number of
breakpoints for w1a data ($n = 2477$, $p = 300$) with $B = 10$ and $B = n$.
When $B = n$, there are no upper bounds for $|\cB|$.
In the left plot, when $B= n$, 
we see that the CPU time 
is longer than the case of $B = 10$.
In this data set,
this difference of the CPU time mainly comes from the cost of 
 the matrix update and
QP (\ref{eq:partition.qp})
whose size is
proportional to $|\cB|$ (data not shown).
On the other hand, in the left plot, 
the number of breakpoints is stable in the both case of 
$B = n$ and $B = 10$, and interestingly,
the number itself is almost the same in these two settings.
Our results suggest that too many $\cB$ does not 
contribute to reduce the number of breakpoint.
Although
these unstable results in $B = n$ is not always happen, 
we observed that it is more stable to use
 $B = 10$ or $B = 100$ in several other data sets.

\begin{figure}[tb]
 \begin{center}  
 \subfigure[CPU time]{
 \includegraphics[width=2in]{./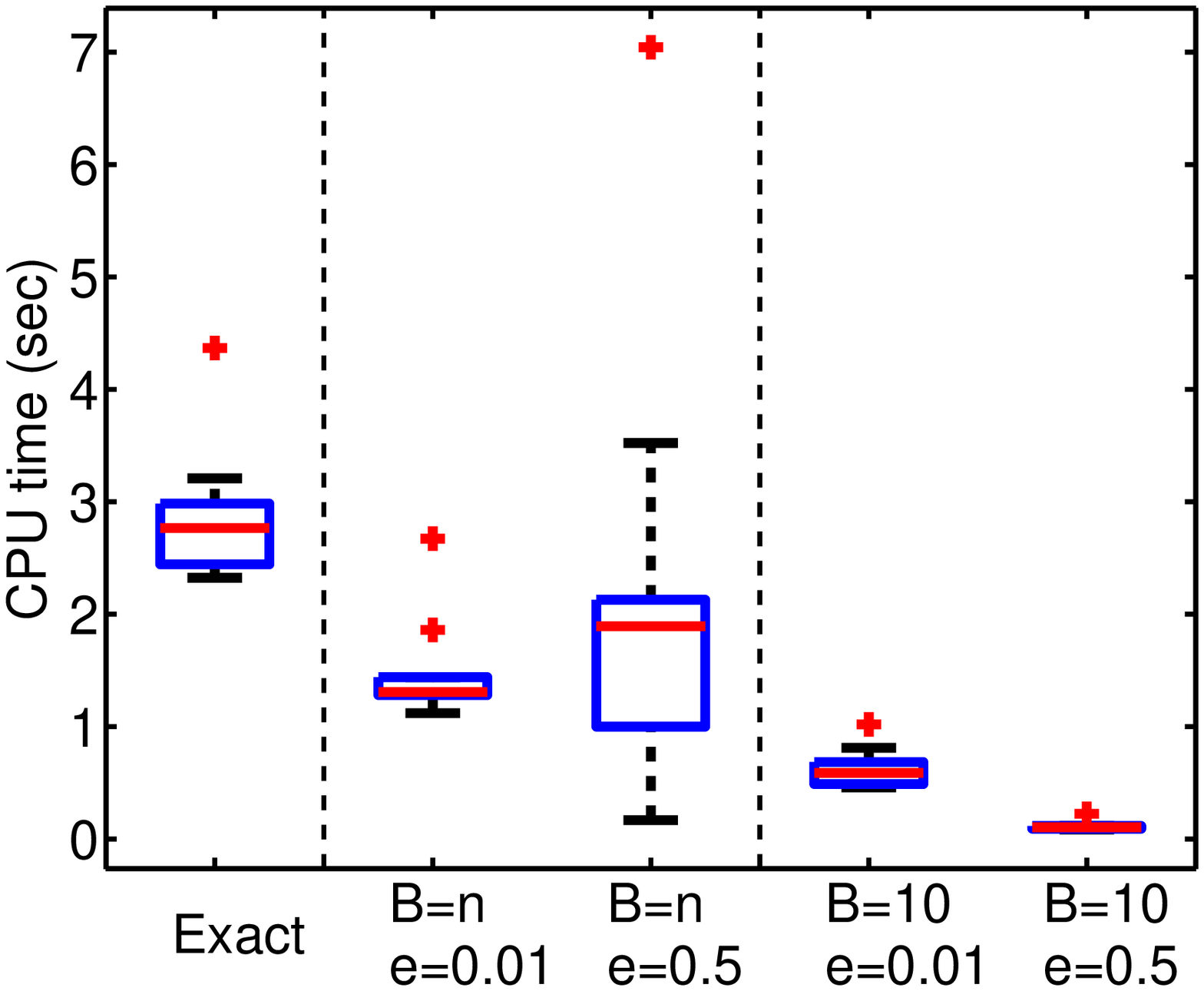}
 }
 \subfigure[Breakpoint]{
 \includegraphics[width=2in]{./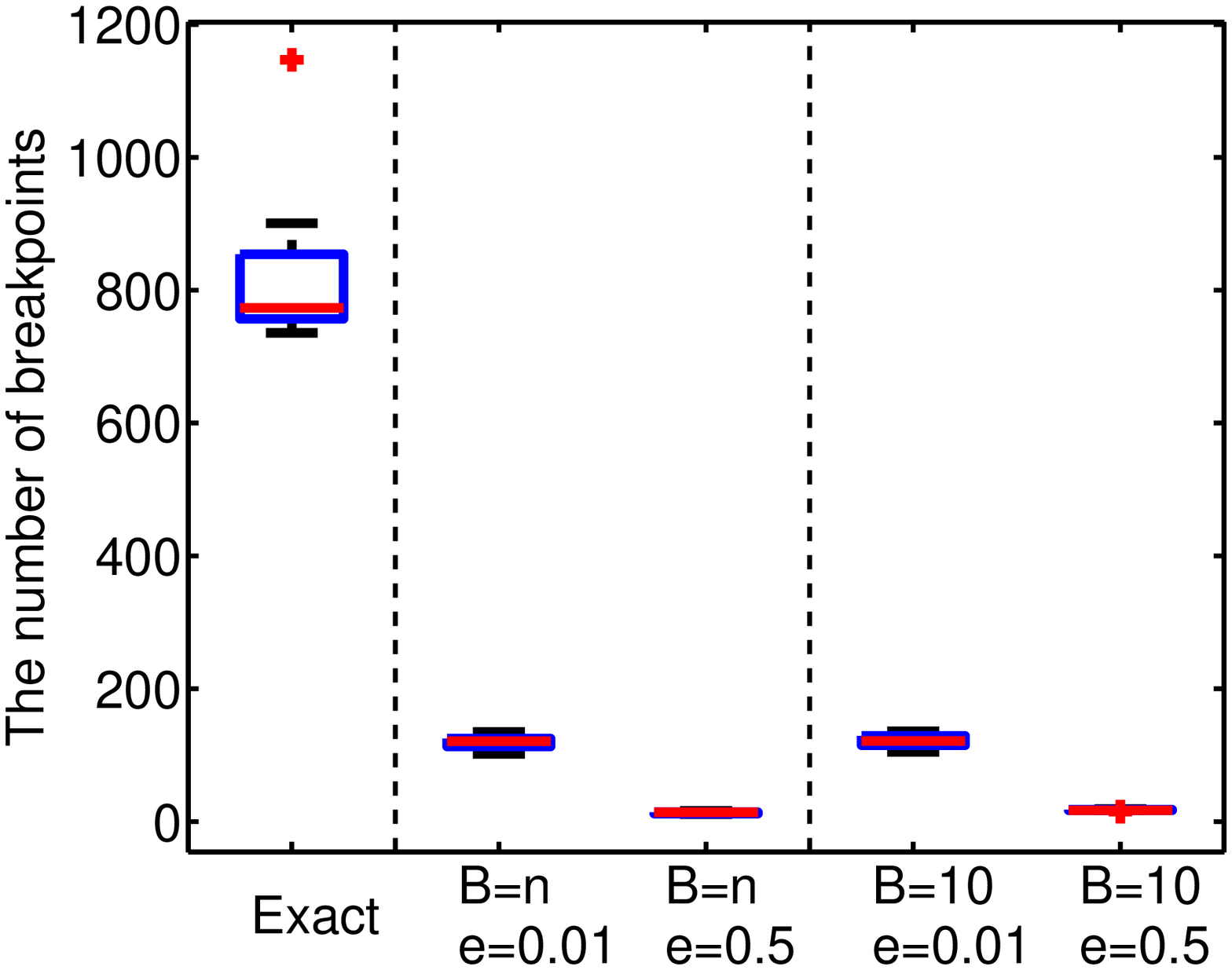}
 }
 \caption{The comparisons for different settings of $B$.}
 \label{fig:time.bp.vs.B}
 \end{center}
\end{figure}

We also compared the difference of $\pi$ between 
the exact solution path and the suboptimal path
in order to see the degree of approximation in terms of the active set.
Let $I_i \in \{0,1\}$ be an indicator variable which has $1$
when a data point $i$ belongs to different set
among $\cM$, $\cO$ and $\cI$ between two solution paths.
\figurename~\ref{fig:seterr} shows plots of $10$ runs average of
$\sum_{i=1}^n I_i / n$ for 
$e =  0.5$ in a5a data set.
We see that
the difference is at most about $10$\%.
\figurename~\ref{fig:idxptrn} shows the size of each index set
(this plot is one of $10$ runs).
Although the small differences exist, 
the changing patterns are similar each other.

\begin{figure}[tb]
 \begin{center}  
\subfigure[The difference of $\pi$]{
\includegraphics[width=2in]{./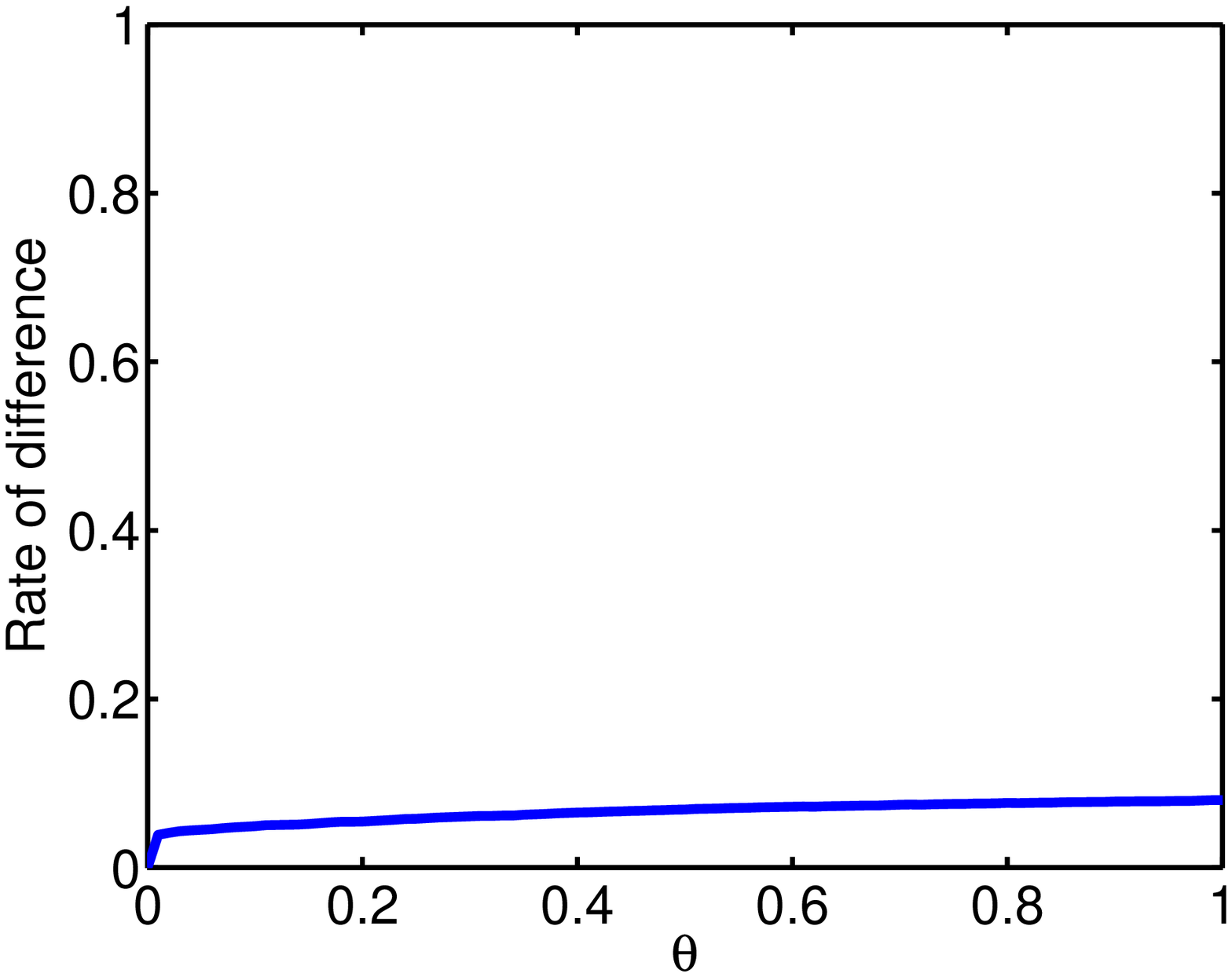}
\label{fig:seterr}
}
%
\subfigure[The size of each index set (solid: exact path, dashed: suboptimal path)]{
 \includegraphics[width=2in]{./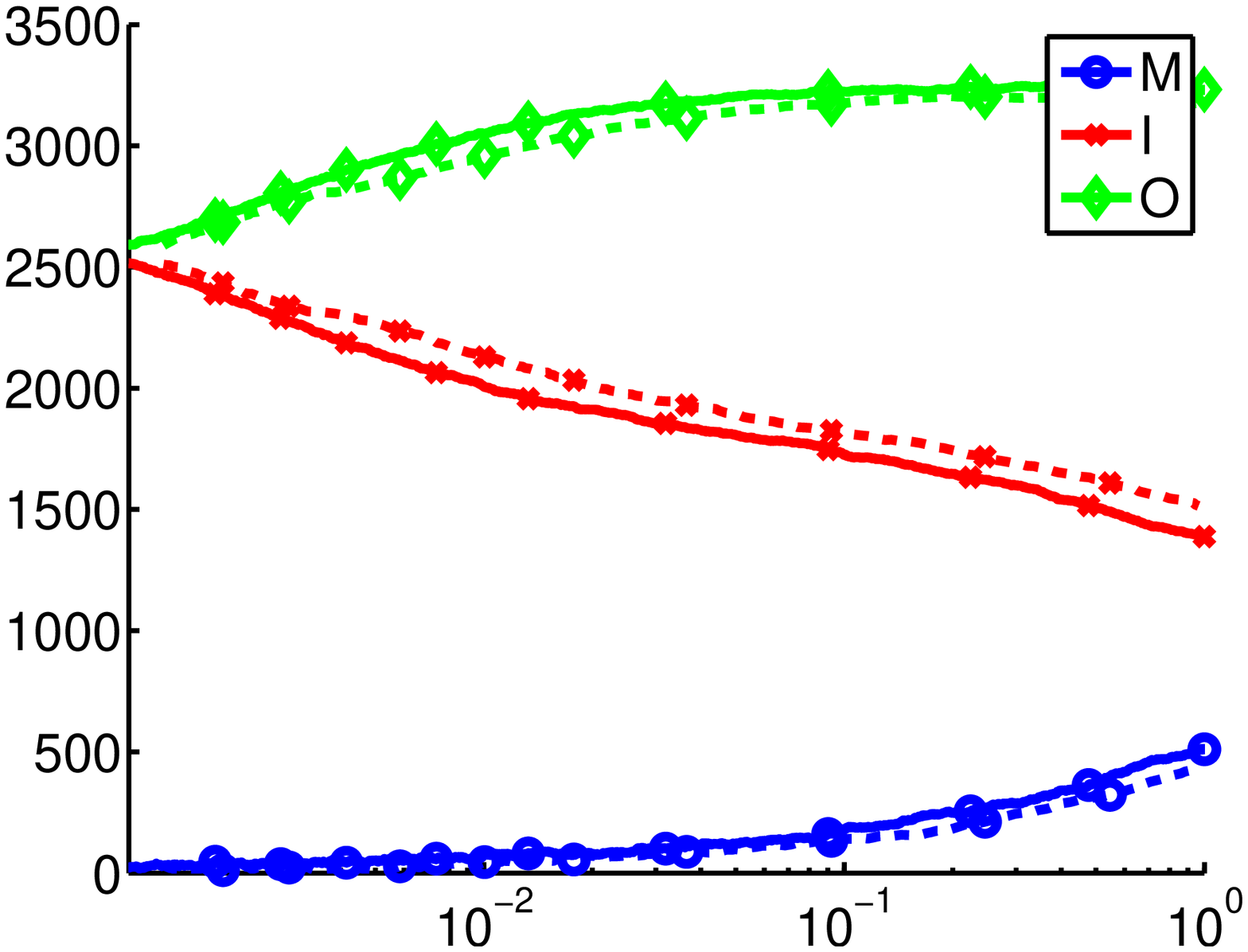}
 \label{fig:idxptrn}
 }
 \caption{Comparisons of the behavior of $\pi$}
 \end{center}
\end{figure}

%

\tablename~\ref{tab:terr}
shows results of test error rate comparison for $e = 0.5$.
We used $60$\% of the data for training, $20$\% for validation and
$20$\% for testing.
In each data set, 
we see that the performances of our suboptimal solutions 
are comparable to the exact solution path.


\begin{table}
  \caption{Test error rate and its standard error}
  \label{tab:terr}
  \vskip 0.1in
 \begin{center} 
 \begin{tabular}{l|cc}
  data & exact path & $e = 0.5$  \\ \hline
  ad & 0.0326 (0.0021) & 0.0328 (0.0026) \\ 
  spam & 0.0770 (0.0036) & 0.0812 (0.0037) \\ 
  a5a & 0.1587 (0.0025) & 0.1597 (0.0031) \\ 
  w5a & 0.0171 (0.0012) & 0.0176 (0.0010) \\ 
 \end{tabular}
 \end{center}
\vskip 0.1in
\end{table}



\section{Conclusion}

In this paper, we have developed a suboptimal solution path algorithm
 which
 traces the changes of solutions under the relaxed
optimality conditions.
Our algorithm can reduce the number of breakpoints by moving multiple
indices in $\pi$ at one breakpoint.
Another interesting property of our approach is that
 the suboptimal solutions exactly correspond to the optimal solutions of
the perturbed problems from the original SVM optimization problems.
The experimental results demonstrate that our algorithm efficiently
follows the path and it has similar patterns of active sets and classification
 performances compared to the exact path.

\small{
\bibliography{ref}
\bibliographystyle{icml2011}
}

\makeatletter
 \renewcommand{\thefigure}{%
 \thesection.\arabic{figure}}
 \@addtoreset{figure}{section}
\makeatother
\numberwithin{equation}{section}
\renewcommand{\theequation}{\Alph{section}.\arabic{equation}}

\appendix
\section*{Appendix}

\red{Here, we provide proofs of Theorems 1, 2 and 
simplified formulation of the optimization problem (\ref{eq:partition.qp}). }

\section{Proof of Theorem 1}
\label{sec:proof_thm1}

Here, we provide a proof of the following theorem:

\setcounter{thm}{0}
\begin{thm} \label{thm:}
Let 
$\pi = (\cO, \cM, \cI)$ 
be the partition at the optimal solution at 
$\theta$
and assume that
\begin{eqnarray*}
\vM =
  \begin{bmatrix}
   0 & \vy_{\cM}^\top \\
   \vy_{\cM} & \vQ_{\cM} 
  \end{bmatrix}
\end{eqnarray*}
is non-singular.
Then, as long as $\pi$ is unchanged,
$\{ \beta_i \}_{i = 0}^n$ is given by
\begin{align}
 \begin{aligned}
 \begin{bmatrix}
  \beta_0 \\
  \vbeta_\cM
 \end{bmatrix} 
 & = 
 - \vM^{-1}
 \begin{bmatrix}
  \vy_{\cal I}^\top \\
  \vQ_{{\cal M},{\cal I}} 
 \end{bmatrix}
 \mb{d}_\cI, 
 \\  
 \vbeta_\cO &= \bm{0}, 
 \\\  
 \vbeta_\cI &= \mb{d}_\cI.  
 \end{aligned}
\label{eq:apx_beta}
\end{align}
\end{thm}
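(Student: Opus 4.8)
The plan is to obtain (\ref{eq:apx_beta}) directly from the Karush--Kuhn--Tucker (KKT) conditions of the quadratic program (\ref{eq:partition.qp}), reading $\beta_0$ as the Lagrange multiplier attached to the linear equality constraint $\vy^\top \vbeta = 0$ and the remaining $\beta_i$ as the primal variables subject to the box constraints $0 \le \beta_i \le d_i$. The three index sets record the active-set status at optimality: $i \in \cO$ means $\beta_i$ rests at its lower bound $0$, $i \in \cI$ means $\beta_i$ rests at its upper bound $d_i$, and $i \in \cM$ means $\beta_i$ is strictly interior. The two assignments $\vbeta_\cO = \bm{0}$ and $\vbeta_\cI = \mb{d}_\cI$ are then immediate from the definition of the partition $\pi$, so the only substantive work is to determine $\beta_0$ together with $\vbeta_\cM$.

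First I would write out the stationarity condition of the Lagrangian. For an interior index $i \in \cM$ complementary slackness forces both box-constraint multipliers to vanish, so the gradient condition collapses to $(\vQ\vbeta)_i + \beta_0 y_i = 0$ for every $i \in \cM$. Splitting the product $\vQ\vbeta$ according to the partition and inserting the already-known blocks $\vbeta_\cO = \bm{0}$ and $\vbeta_\cI = \mb{d}_\cI$ reduces this to the block equation $\vQ_\cM \vbeta_\cM + \beta_0 \vy_\cM = -\,\vQ_{\cM,\cI}\mb{d}_\cI$, where the off-diagonal block $\vQ_{\cM,\cO}$ has dropped out because it multiplies $\vbeta_\cO = \bm{0}$. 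Next I would take the primal feasibility relation $\vy^\top \vbeta = 0$ and apply the same substitution, turning it into $\vy_\cM^\top \vbeta_\cM = -\,\vy_\cI^\top \mb{d}_\cI$.

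I would then stack these two relations into a single bordered linear system, the feasibility equation supplying the top row and the stationarity block supplying the remaining rows. The coefficient matrix is exactly $\vM = \begin{bmatrix} 0 & \vy_\cM^\top \\ \vy_\cM & \vQ_\cM \end{bmatrix}$ and the right-hand side assembles as $-\begin{bmatrix} \vy_\cI^\top \\ \vQ_{\cM,\cI} \end{bmatrix}\mb{d}_\cI$. Invoking the hypothesis that $\vM$ is non-singular, I multiply on the left by $\vM^{-1}$ to recover the first line of (\ref{eq:apx_beta}), which together with $\vbeta_\cO = \bm{0}$ and $\vbeta_\cI = \mb{d}_\cI$ completes the statement.

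I do not expect a deep obstacle, since the argument is a direct KKT computation; the genuine ingredients are only two. The first is verifying that the stationarity condition on $\cM$ is homogeneous in its constant term, i.e.\ that any linear term in the objective of (\ref{eq:partition.qp}) either vanishes or has been folded into the bordered form so that the right-hand side carries only the $\mb{d}_\cI$ contributions; this is where I would be most careful to match the exact problem (\ref{eq:partition.qp}). The second is the non-singularity of $\vM$, which is precisely what licenses the inversion and is therefore assumed rather than proved. The remaining care is purely bookkeeping in splitting $\vQ\vbeta$ and $\vy^\top\vbeta$ across the three blocks.
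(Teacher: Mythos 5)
You assemble the correct bordered system and your final inversion reproduces (\ref{eq:apx_beta}), but your starting point is not the paper's, and the step you flag as needing care is precisely the step that constitutes the proof. In the paper, $\vbeta$ is not the primal optimizer of a QP with box constraints $0 \le \beta_i \le d_i$: it is the direction in which the optimal dual solution $\valpha^{(\theta)}$ (together with $\alpha_0^{(\theta)}$) moves as $\theta$ varies while the partition $\pi$ stays fixed. This changes the justification of every block. The identities $\vbeta_\cO = \mb{0}$ and $\vbeta_\cI = \mb{d}_\cI$ come not from $\beta_i$ sitting at a bound, but from $\alpha_i = 0$ on $\cO$ and $\alpha_i = C_i^{(\theta)}$ on $\cI$, the latter moving at rate $d_i$ because the upper bound itself is the parametrized quantity. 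More importantly, the stationarity condition on $\cM$ is not homogeneous: at the optimum it reads
\begin{eqnarray*}
 \vQ_\cM \valpha^{(\theta)}_\cM + \vQ_{\cM,\cI}\vc^{(\theta)}_\cI + \vy_\cM \alpha^{(\theta)}_0 = \mb{1},
\end{eqnarray*}
with the constant right-hand side $\mb{1}$ coming from the linear term of the dual objective. The paper solves this system (together with $\vy^\top\valpha = 0$) for $(\alpha_0^{(\theta)}, \valpha^{(\theta)}_\cM)$ as an affine function of $\vc_\cI^{(\theta)}$, and then compares two parameter values using $\vc^{(\theta+\Delta\theta)} = \vc^{(\theta)} + \theta\vd$; the constant term $\vM^{-1}$ applied to $(0, \mb{1}^\top)^\top$ cancels in the difference, and only then does the homogeneous system in $\vbeta$ that you wrote down emerge. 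You assumed this homogeneity (``any linear term \ldots either vanishes or has been folded in'') rather than deriving it, and deriving it is essentially the whole content of Theorem 1.

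There is also a structural issue with grounding the argument in the KKT conditions of the QP (\ref{eq:apx_partition.qp}): the fact that the path direction $\vbeta$ is characterized by that QP is Theorem 2, which the paper proves via Lemma 1 by showing that the QP's optimality conditions reduce to the linear system (\ref{eq:apx_beta}) of Theorem 1. Taking the QP characterization of $\vbeta$ as your starting point therefore inverts the paper's logical order and, inside this paper, would be circular. The repair is short but changes the nature of the argument: phrase everything in terms of $\valpha$, write the optimality conditions of the original parametric dual at $\theta$ and at $\theta+\Delta\theta$ under the unchanged partition, subtract, and normalize by the parameter increment; your block bookkeeping then goes through verbatim.
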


\begin{proof}
As long as $\pi$ is unchanged,
$\alpha_i$ for $i \in \cO$ and $i \in \cI$
must be
\begin{eqnarray*}
 \alpha_i &=& 0, \ i \in \cO, \\
 \alpha_i &=& C_i^{(\theta)}, \ i \in \cI. 
\end{eqnarray*}
Therefore, we see that
$\vbeta_\cO = \mb{0}$ and $\vbeta_\cI = \vd_\cI$.
From the definition of $\cM$, 
at the optimal, the following linear system holds
\begin{eqnarray*}
 \vQ_\cM \valpha^{(\theta)}_\cM + 
  \vQ_{\cM, \cI} \vc^{(\theta)}_\cI + \vy_\cM \alpha_0^{(\theta)} = \mb{1}.
\end{eqnarray*}
Combining with the equality constraint of the dual problem 
$\vy^\top \valpha = 0$,
we obtain the following linear system:
\begin{eqnarray*}
 \vM 
  \begin{bmatrix}
   \alpha^{(\theta)}_0 \\
   \valpha^{(\theta)}_\cM
  \end{bmatrix}
  +
  \begin{bmatrix}
   \vy_\cI^\top \\
   \vQ_{\cM, \cI}
  \end{bmatrix}
  \vc_\cI^{(\theta)}
  =
  \begin{bmatrix}
   0 \\
   \mb{1}
  \end{bmatrix}.
\end{eqnarray*}
Solving this, we obtain
\begin{eqnarray*}
 \begin{bmatrix}
   \alpha^{(\theta)}_0 \\
   \valpha^{(\theta)}_\cM
 \end{bmatrix}
 = - \vM^{-1}
   \begin{bmatrix}
   \vy_\cI^\top \\
   \vQ_{\cM, \cI}
  \end{bmatrix}
  \vc_\cI^{(\theta)}
  + \vM^{-1}
  \begin{bmatrix}
   0 \\
   \mb{1}
  \end{bmatrix}.
\end{eqnarray*}
Using 
$\vc^{(\theta + \Delta \theta)} = \vc^{(\theta)} + \theta \vd$,
we can write
\begin{eqnarray*}
 \begin{bmatrix}
  \alpha^{(\theta + \Delta \theta)}_0 \\
  \valpha^{(\theta + \Delta \theta)}_\cM
 \end{bmatrix}
 =
 \begin{bmatrix}
   \alpha^{(\theta)}_0 \\
   \valpha^{(\theta)}_\cM
 \end{bmatrix}
 - \theta
\vM^{-1}
 \begin{bmatrix}
  \vy_{\cal I}^\top \\
  \vQ_{{\cal M},{\cal I}} 
 \end{bmatrix}
 \mb{d}_\cI
\end{eqnarray*}
Then, we obtain (\ref{eq:apx_beta}).
\end{proof}

\section{Proof of Theorem 2}
\label{sec:proof_thm2}

Here, we provide a proof of Theorem 2.
First, we prove the following lemma.

\begin{lem} \label{lem:}
 Suppose $\whvbeta \in \RR^n$, $\whbeta_0 \in \RR$ and $\whvg = \vQ \whvbeta + \vy \whbeta_0$
 satisfy the following conditions:
\begin{subequations}
 \begin{align}
  \whg_i \whbeta_i = 0, \
  \whg_i \geq 0, \
  \whbeta_i \geq 0, \
  i \in \cB_\cO,  
  \label{eq:apx_compl.cond1} \\
  \whg_i (d_i - \whbeta_i)  = 0, \
  \whg_i \leq 0, \
  \whbeta_i \leq d_i, \
  i \in \cB_\cI,  
  \label{eq:apx_compl.cond2} \\
  \whvg_{\cM_{k+\frac{1}{2}}} = \mb{0}, \
  \whvbeta_{\cO_{k+\frac{1}{2}}} = \mb{0}, \
  \whvbeta_{\cI_{k+\frac{1}{2}}} = \mb{d}_\cI, \\
   \vy^\top \whvbeta = 0. \label{eq:apx_const.eq.grad} 
 \end{align} 
  \label{eq:apx_grad.cond}
\end{subequations}
 Then,
 $\whbeta_0$,
 $\whvbeta$ and
 $\whvg$
 are equal to 
 $\beta_0$, 
 $\vbeta$ and
 $\vg$, respectively, where 
 $\pi$ is determined by 
 the update rule
\begin{align}
\begin{aligned}
 \cM_k = \ & \cM_{k+\frac{1}{2}} \ \cup \ 
 \{ i \mid \beta_i > 0, g_i = 0, i \in \cB_\cO \} \\ 
 & \cup \ 
 \{ i \mid \beta_i < d_i, g_i = 0, i \in \cB_\cI \}, \\
 \cO_k = \ & \cO_{k+\frac{1}{2}} \ \cup \ 
 \{ i \mid \beta_i = 0, g_i \geq 0, i \in \cB_\cO \}, \\
 \cI_k = \ & \cI_{k+\frac{1}{2}} \ \cup \ 
 \{ i \mid \beta_i = d_i, g_i \leq 0, i \in \cB_\cI \},
\end{aligned}
 \label{eq:apx_update.pi}
\end{align}
 using $\whvbeta$ and $\whvg$.
\end{lem}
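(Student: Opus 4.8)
The plan is to use the update rule (\ref{eq:apx_update.pi}) to assemble a partition $\pi = (\cO_k, \cM_k, \cI_k)$ out of the hatted quantities, to check that $\whbeta_0$ and $\whvbeta$ satisfy the very linear system that characterizes $\beta_0$ and $\vbeta$ for this partition in Theorem~1, and then to invoke the non-singularity of $\vM$ to conclude that the solution of that system is unique. Since (\ref{eq:apx_beta}) exhibits $\beta_0$ and $\vbeta$ as the unique solution of the same system, the two must coincide, after which $\whvg = \vg$ follows immediately from $\whvg = \vQ \whvbeta + \vy \whbeta_0$.

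First I would verify that (\ref{eq:apx_update.pi}) really produces a partition, i.e.\ that every boundary index is assigned to exactly one block. For $i \in \cB_\cO$, condition (\ref{eq:apx_compl.cond1}) gives $\whg_i \whbeta_i = 0$ with $\whg_i \geq 0$ and $\whbeta_i \geq 0$; hence either $\whbeta_i = 0$, in which case $\whg_i \geq 0$ places $i$ in $\cO_k$, or $\whbeta_i > 0$, which forces $\whg_i = 0$ and places $i$ in $\cM_k$. The two cases are mutually exclusive, since membership in $\cM_k$ requires the strict inequality $\whbeta_i > 0$ while membership in $\cO_k$ requires $\whbeta_i = 0$; in the degenerate case $\whbeta_i = \whg_i = 0$ the index is routed to $\cO_k$ only. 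The same argument applied to (\ref{eq:apx_compl.cond2}) for $i \in \cB_\cI$ splits $\cB_\cI$ cleanly between $\cM_k$ and $\cI_k$. This is the only place where care is genuinely needed, and it is exactly what the complementarity conditions buy us.

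Next I would read off the block identities for this $\pi$. The prescribed values $\whvbeta_{\cO_{k+\frac{1}{2}}} = \mb{0}$ and $\whvbeta_{\cI_{k+\frac{1}{2}}} = \mb{d}_\cI$, together with the assignments just made (indices added to $\cO_k$ carry $\whbeta_i = 0$, indices added to $\cI_k$ carry $\whbeta_i = d_i$), give $\whvbeta_{\cO_k} = \mb{0}$ and $\whvbeta_{\cI_k} = \mb{d}_\cI$. Likewise $\whvg_{\cM_{k+\frac{1}{2}}} = \mb{0}$ and the fact that every index added to $\cM_k$ carries $\whg_i = 0$ yield $\whvg_{\cM_k} = \mb{0}$. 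Substituting $\whvbeta_{\cO_k} = \mb{0}$ and $\whvbeta_{\cI_k} = \mb{d}_\cI$ into the $\cM_k$-block of $\whvg = \vQ \whvbeta + \vy \whbeta_0$ and appending the equality constraint (\ref{eq:apx_const.eq.grad}) $\vy^\top \whvbeta = 0$ produces, after collecting terms, exactly
\begin{eqnarray*}
 \vM
 \begin{bmatrix}
  \whbeta_0 \\
  \whvbeta_\cM
 \end{bmatrix}
 +
 \begin{bmatrix}
  \vy_\cI^\top \\
  \vQ_{\cM,\cI}
 \end{bmatrix}
 \mb{d}_\cI
 =
 \begin{bmatrix}
  0 \\
  \mb{0}
 \end{bmatrix},
\end{eqnarray*}
where I identify $\cM = \cM_k$, $\cO = \cO_k$, $\cI = \cI_k$.

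Finally, since $\vM$ is non-singular for $\pi$, this system has a unique solution, namely the right-hand side of (\ref{eq:apx_beta}); therefore $\whbeta_0 = \beta_0$ and $\whvbeta_\cM = \vbeta_\cM$, and combined with $\whvbeta_{\cO_k} = \mb{0} = \vbeta_\cO$ and $\whvbeta_{\cI_k} = \mb{d}_\cI = \vbeta_\cI$ this establishes $\whvbeta = \vbeta$. The identity $\whvg = \vQ \whvbeta + \vy \whbeta_0 = \vQ \vbeta + \vy \beta_0 = \vg$ then closes the argument. The assumption I am carrying over implicitly is that $\vM$ remains non-singular for the updated partition; the main obstacle is really the bookkeeping of the second paragraph, namely showing that the update rule is consistent with the complementarity conditions so that the reduced linear system is precisely the one from Theorem~1.
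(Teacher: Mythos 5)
Your proposal is correct and takes essentially the same route as the paper's own proof: use the complementarity conditions (\ref{eq:apx_compl.cond1})--(\ref{eq:apx_compl.cond2}) to show that the update rule (\ref{eq:apx_update.pi}) assigns every boundary index to exactly one block, read off $\whvg_{\cM} = \mb{0}$, $\whvbeta_{\cO} = \mb{0}$, $\whvbeta_{\cI} = \mb{d}_\cI$ from the construction, and combine these with $\vy^\top \whvbeta = 0$ to recover the linear system of Theorem~1, whose unique solution is (\ref{eq:apx_beta}). You merely spell out two steps the paper leaves implicit --- the case analysis behind the partition assignment and the uniqueness argument via non-singularity of $\vM$ for the updated partition --- so the argument is the same, just more detailed.
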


\begin{proof}
 Since the conditions (\ref{eq:apx_compl.cond1}) and (\ref{eq:apx_compl.cond2})
 hold, all of the elements of $\cB$ is assigned to one of the three
 index sets by (\ref{eq:apx_update.pi}).
 From the definitions of 
 $\cM_{k+1}$, $\cO_{k+1}$ and $\cI_{k+1}$ (\ref{eq:apx_update.pi}),
 we see
 $\whvg_{\cM_{k+1}} = \mb{0}$,
 $\whvbeta_{\cO_{k+1}} = \mb{0}$ and
 $\whvbeta_{\cI_{k+1}} = \vd_{\cI_{k+1}}$.
 Using these three equations and (\ref{eq:apx_const.eq.grad}),
 we can easily obtain the same linear system as (\ref{eq:apx_beta}).
\end{proof}

Next, we consider theorem 2.

\begin{thm} \label{thm:}
Let 
$\whbeta_0$,
$\whvbeta$ and
$\whvg$ 
be the optimal solutions of the following QP problem:
\begin{eqnarray}
 \min_{\whbeta_0, \whvbeta, \whvg}  &&
  \sum_{i \in \cB_\cO} \whg_i \whbeta_i + 
  \sum_{i \in \cB_\cI} \whg_i (\whbeta_i - d_i) 
  \label{eq:apx_partition.qp} \\
 {\rm s.t.} & & \left\{ 
  \begin{aligned} 
  &  \whvg_{\cB_\cO} \geq \bm{0}, \ \whvbeta_{\cB_\cO} \geq \bm{0}, 
  \ \whvg_{\cB_\cI} \leq \bm{0}, \ \whvbeta_{\cB_\cI} \leq \mb{d}_\cI,
   \nonumber \\
  &  \whvg_{\cM_{k+\frac{1}{2}}} = \bm{0}, \
   \whvbeta_{\cO_{k+\frac{1}{2}}} = \bm{0}, \
   \whvbeta_{\cI_{k+\frac{1}{2}}} = \mb{d}_{\cI_{k+\frac{1}{2}}}, 
   \nonumber \\
  & \vy^\top \whvbeta = 0, \ 
   \whvg = \vQ \whvbeta + \vy \whbeta_0, \nonumber 
  \end{aligned} \right. \nonumber 
\end{eqnarray}
and $\pi$ is determined by (\ref{eq:apx_update.pi}) using 
$\whvbeta$ and $\whvg$.
Then
$\whbeta_0$,
$\whvbeta$ and
$\whvg$ 
satisfy (\ref{eq:apx_grad.cond}) and they are equal to the gradient 
$\beta_0$,
$\vbeta$ and
$\vg$, respectively.
\end{thm}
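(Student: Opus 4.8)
The plan is to show that any optimizer of the QP (\ref{eq:apx_partition.qp}) satisfies the full system (\ref{eq:apx_grad.cond}), and then to invoke the preceding lemma to conclude that it coincides with $\beta_0$, $\vbeta$ and $\vg$. The key observation is that the feasible region of (\ref{eq:apx_partition.qp}) already imposes every line of (\ref{eq:apx_grad.cond}) verbatim \emph{except} the two complementarity equations $\whg_i \whbeta_i = 0$ for $i \in \cB_\cO$ and $\whg_i(d_i - \whbeta_i) = 0$ for $i \in \cB_\cI$. Indeed, the sign constraints $\whvg_{\cB_\cO} \geq \bm{0}$, $\whvbeta_{\cB_\cO} \geq \bm{0}$, $\whvg_{\cB_\cI} \leq \bm{0}$, $\whvbeta_{\cB_\cI} \leq \mb{d}_\cI$, the fixed-set equalities on $\cM_{k+\frac{1}{2}}$, $\cO_{k+\frac{1}{2}}$, $\cI_{k+\frac{1}{2}}$, the equality $\vy^\top \whvbeta = 0$, and the defining relation $\whvg = \vQ \whvbeta + \vy \whbeta_0$ are all constraints of the QP. Hence the entire theorem reduces to establishing complementarity at the optimum.

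First I would show that the objective is a sum of nonnegative terms over the feasible set. For $i \in \cB_\cO$ the constraints give $\whg_i \geq 0$ and $\whbeta_i \geq 0$, so $\whg_i \whbeta_i \geq 0$; for $i \in \cB_\cI$ they give $\whg_i \leq 0$ and $\whbeta_i - d_i \leq 0$, so $\whg_i(\whbeta_i - d_i) \geq 0$. Thus the objective is bounded below by $0$, and it attains $0$ at a feasible point if and only if every term vanishes, i.e.\ precisely when both complementarity equations hold. It therefore remains only to verify that the optimal value is genuinely $0$.

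To pin the optimum at $0$, I would exhibit a feasible point whose objective equals $0$, namely the true solution $(\beta_0, \vbeta, \vg)$. By the KKT conditions of the original dual problem this triple satisfies complementarity together with all the sign and equality constraints, and since the fixed index sets $\cM_{k+\frac{1}{2}}$, $\cO_{k+\frac{1}{2}}$, $\cI_{k+\frac{1}{2}}$ are by construction subsets of the true $\cM$, $\cO$, $\cI$, it also meets the fixed-set equalities; hence it is feasible with objective value $0$. Consequently the optimal value of (\ref{eq:apx_partition.qp}) is $0$, so at any optimizer $(\whbeta_0, \whvbeta, \whvg)$ every term of the objective vanishes, yielding the missing complementarity equations. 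Together with the constraints this shows $(\whbeta_0, \whvbeta, \whvg)$ satisfies all of (\ref{eq:apx_grad.cond}), and the preceding lemma then gives $\whbeta_0 = \beta_0$, $\whvbeta = \vbeta$, $\whvg = \vg$ for the partition $\pi$ produced by (\ref{eq:apx_update.pi}).

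The main obstacle I anticipate is the feasibility check for the candidate zero-objective point: confirming that the true KKT solution respects the fixed-set equalities on $\cM_{k+\frac{1}{2}}$, $\cO_{k+\frac{1}{2}}$, $\cI_{k+\frac{1}{2}}$. This is where the structural relationship between the ``certain'' sets and the true partition must be used, and it is exactly what turns the nonnegative-objective bound into an equality and closes the argument, rather than merely bounding the optimum from below.
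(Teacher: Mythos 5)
Your reduction of the theorem to the single claim ``the optimal value of (\ref{eq:apx_partition.qp}) is $0$'' is exactly right, and its first half matches the paper: on the feasible set every term of the objective is nonnegative, so the optimum equals $0$ precisely when complementarity holds at the optimizer, and then Lemma~1 finishes the argument. The divergence --- and the gap --- is in how you certify that the optimum really is $0$. You do it by exhibiting the true gradient $(\beta_0,\vbeta,\vg)$ as a feasible point with objective value $0$, but the two facts needed for its feasibility are precisely the ones you do not prove. First, the sign and complementarity conditions $\vg_{\cB_\cO} \geq \bm{0}$, $\vbeta_{\cB_\cO} \geq \bm{0}$, $g_i \beta_i = 0$ (and their $\cB_\cI$ counterparts) do \emph{not} follow from ``the KKT conditions of the original dual problem'': those conditions constrain $\valpha$ and its associated KKT residual, not their $\theta$-derivatives $\vbeta$ and $\vg$. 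The indices in $\cB$ are exactly the degenerate ones where the classification is ambiguous, and one-sided conditions on the \emph{derivatives} at such indices are a nontrivial statement about directional differentiability of the solution path --- essentially the content that this QP characterization is being built to establish, so assuming it risks circularity. Second, the inclusions $\cM_{k+\frac{1}{2}} \subseteq \cM$, $\cO_{k+\frac{1}{2}} \subseteq \cO$, $\cI_{k+\frac{1}{2}} \subseteq \cI$ with respect to the true partition are invoked ``by construction'' but never established; feasibility of the gradient needs them together with the first point, and your proof supplies neither.

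The paper closes this hole without ever touching the true gradient. After the sign change $\tvbeta = \mb{E}^{(2)} \mb{d} + \mb{E} \whvbeta$, $\tvg = \mb{E} \whvg$, which turns the objective into the convex quadratic $\tvbeta^\top \tvQ \tvbeta + r_0 \beta_0 + \mb{r}^\top \tvbeta$, it forms the Lagrangian dual (\ref{eq:apx_partition.qp.dual.obj})--(\ref{eq:apx_partition.qp.dual.const}) and shows by direct substitution of the dual constraints that every dual-feasible point has value $-(\tvbeta+\vxi)^\top \tvQ (\tvbeta+\vxi) - \tilde{\vmu}^\top \tilde{\vnu} \leq 0$. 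Since the primal value is $\geq 0$ and strong duality holds for a convex QP whose optimum is attained, the optimal value is $0$, and complementarity follows at any optimizer. This argument is self-contained: it requires no prior knowledge of how the certain sets relate to the true partition, which is exactly the knowledge your argument presupposes. To keep your route you would have to prove separately that the right derivative of the solution path exists and satisfies (\ref{eq:apx_grad.cond}) --- at which point the theorem would be nearly redundant --- so the duality step is not a stylistic alternative here but the piece that lets the theorem do its job.
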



\begin{proof}
 In this proof, we omit subscript $k+\frac{1}{2}$ 
 to simplify the notation.
 First, we rewrite the optimization problem
 (\ref{eq:apx_partition.qp}) as follows:
 \begin{eqnarray*}
 \min_{\whbeta_0, \whvbeta, \whvg} &&
  \sum_{i \in \cB_\cO \cup \cM \cup \cO} g_i \whbeta_i + 
  \sum_{i \in \cB_\cI \cup \cI} g_i (\whbeta_i - d_i) \label{eq:apx_partition.qp2}\\
 {\rm s.t.} && 
  \left\{
   \begin{aligned}
    &  \whvg = \vQ \whvbeta + \vy \whbeta_0, \nonumber \\
    &  \whvg_{\cB_\cO} \geq \bm{0}, \ \whvbeta_{\cB_\cO} \geq 0, \nonumber \\
    &  \whvg_{\cB_\cI} \leq \bm{0}, \ \whvbeta_{\cB_\cI} \leq \mb{d}_\cI, \nonumber \\
    &  \whvg_\cM = \bm{0}, \ \whvbeta_\cO = \bm{0}, \ \whvbeta_\cI = \mb{d}_\cI, \nonumber \\
    &  \vy^\top \whvbeta = 0.\nonumber 
   \end{aligned}
\right.
\end{eqnarray*} 
Although we slightly modified the expression of the objective function,
its value is the same as (\ref{eq:apx_partition.qp})
as long as the equality constraints hold.
From the inequality constraints, we see that the objective value
 is always non-negative in the feasible region.

To simplify the notation, we introduce the following new variables:
\begin{eqnarray*}
 \tvbeta &=& \mb{E}^{(2)} \mb{d} + \mb{E} \whvbeta, \
  \tvg = \mb{E} \whvg, \\
 \tvy &=& \mb{E} \vy, \\
 \tvQ &=& \mb{E} \vQ \mb{E},
\end{eqnarray*}
where
\begin{eqnarray*}
 E^{(1)}_{ij} &=&
  \left\{
   \begin{array}{ll}
    1 & \text{ for } \{(i,j) \mid i = j, i \in \cB_\cO \cup \cM \cup \cO \}, \\
    0 & \text{ others },
   \end{array}
  \right. \\
 E^{(2)}_{ij} &=&
  \left\{
   \begin{array}{ll}
    1 & \text{ for } \{(i,j) \mid i = j, i \in  \cB_\cI \cup \cI \}, \\
    0 & \text{ others },
   \end{array}
  \right.\\
 \mb{E} &=& \mb{E}^{(1)} - \mb{E}^{(2)}.
\end{eqnarray*}
Moreover, if we set $\cT = \cO \cup \cI$, the optimization 
problem (\ref{eq:apx_partition.qp}) is written as
\begin{eqnarray*}
 \min_{\beta_0, \tvbeta, \tvg} && 
  \tvbeta^\top \tvQ \tvbeta + r_0 \beta_0 + \mb{r}^\top \tvbeta \\
 {\rm s.t.} && 
  \left\{
   \begin{aligned}  
  &  \tilde{\vQ} \tilde{\vbeta} + \tilde{\vy} \beta_0 + \mb{r} - \tilde{\mb{g}}
  = \bm{0}, \\
  &  \tilde{\mb{g}}_\cM = \bm{0}, \tilde{\mb{g}}_\cB \geq \bm{0}, \\
  &  \tilde{\vbeta}_\cT = \bm{0}, \tilde{\vbeta}_\cB \geq \bm{0}, \\
  &  \tilde{\vy}^\top \tilde{\vbeta} = r_0,
   \end{aligned}
  \right.
\end{eqnarray*}
where
\begin{eqnarray*}
 \mb{r} &=& - \tilde{\vQ}_{:,\cI} \mb{d}_\cI - \tilde{\vQ}_{:,\cB_cI}
  \mb{d}_{\cB_\cI}, \
 r_0 = - \tilde{\vy}_\cI^\top \mb{d}_\cI - \tilde{\vy}_{\cB_cI} \mb{d}_{\cB_\cI},
\end{eqnarray*} 
are constants.
Let
$\vxi \in \RR^n, \vmu_\cM \in \RR^{|\cM|}, \vmu_\cB \in
 \RR^{|\cB|}, \vnu_\cT \in \RR^{|\cT|}, \vnu_\cB \in
 \RR^{|\cB|}, \rho \in \RR$
be the Lagrange multipliers.
Then, the Lagrangian is
\begin{eqnarray*}
 L &=& 
  \tvbeta^\top \tvQ \tvbeta + r_0 \beta_0 + \mb{r}^\top \tvbeta 
  + \mb{\xi}^\top \left(
		     \tilde{\vQ} \tilde{\vbeta} + \tilde{\vy} \beta_0 + \mb{r} - \tilde{\mb{g}}
		    \right) \\
 && + \vmu_\cM^\top \tilde{\mb{g}}_\cM - \vmu_\cB^\top \tilde{\mb{g}}_\cB 
  + \vnu_\cT^\top \tilde{\vbeta}_\cT - \vnu_\cB^\top \tilde{\vbeta}_\cB \\
 && + \rho \left( \tilde{\vy}^\top \tilde{\vbeta} - r_0 \right),
\end{eqnarray*}
where 
$\vmu_\cB \geq \bm{0}$, 
$\vnu_\cB \geq \bm{0}$.
Differentiating $L$, we obtain
\begin{eqnarray*}
 \pd{L}{\tvbeta} &=&
  2 \tvQ \tvbeta + \mb{r} + \tvQ \mb{\xi} + \tilde{\vnu} + \rho \tvy = \bm{0},\\
 \pd{L}{\beta_0} &=&
  r_0 + \mb{\xi}^\top \tvy = \bm{0}, \\
 \pd{L}{\tvg} &=& - \mb{\xi} + \tilde{\vmu} = 0,
\end{eqnarray*}
where
$\tilde{\vnu} \in \RR^n$
 is a vector whose components are
$\tilde{\vnu}_\cM = \bm{0}$, 
$\tilde{\vnu}_\cB = - \vnu_\cB$, 
$\tilde{\vnu}_\cT = \vnu_\cT$ and
$\tilde{\mb{\mu}} \in \RR^n$ has
$\tilde{\mb{\mu}}_\cM = \mb{\mu}_\cM$, 
$\tilde{\mb{\mu}}_\cB = - \mb{\mu}_\cB$, 
$\tilde{\mb{\mu}}_\cT = \bm{0}$.
Using these equations, we obtain the following dual problem: 
\begin{eqnarray}
 \max_{\tvbeta,\mb{\xi},\tilde{\mb{\nu}},\tilde{\mb{\mu}},\rho} &&
  - \tvbeta^\top \tvQ \tvbeta - \rho r_0 + \mb{\xi}^\top \mb{r}
  \label{eq:apx_partition.qp.dual.obj} \\
 {\rm s.t.} && \left\{
 \begin{aligned}
  & 2 \tvQ \tvbeta + \mb{r} + \tvQ \mb{\xi} + \tilde{\mb{\nu}} + \rho \tvy
  = \bm{0},
  \\
  & r_0 + \mb{\xi}^\top \tvy = \bm{0}, 
  \\ 
  & - \mb{\xi} + \tilde{\mb{\mu}} = 0, 
  \\
  & \tilde{\mb{\nu}}_\cM = \bm{0}, \ \tilde{\mb{\nu}}_\cB \leq \bm{0}, 
  \\
  & \tilde{\mb{\mu}}_\cT = \bm{0}, \ \tilde{\mb{\mu}}_\cB \leq \bm{0}.
 \end{aligned}
 \right.
  \label{eq:apx_partition.qp.dual.const} 
\end{eqnarray}
Using the constraints of this problem (\ref{eq:apx_partition.qp.dual.const}),
we can derive the following bound of the objective function (\ref{eq:apx_partition.qp.dual.obj}):
\begin{align*}
& - \tvbeta^\top \tvQ \tvbeta - \rho r_0 + \mb{\xi}^\top \mb{r} \\
  &=  - \tvbeta^\top \tvQ \tvbeta + \rho \mb{\xi}^\top \tvy + \mb{\xi}^\top \mb{r} \\
  &=  - \tvbeta^\top \tvQ \tvbeta 
  - 2 \tvbeta^\top \tvQ \mb{\xi} - \mb{\xi}^\top \tvQ \mb{\xi} - \mb{\xi}^\top \tilde{\mb{\nu}}  \\
 &= - (\tvbeta + \mb{\xi})^\top \tvQ (\tvbeta + \mb{\xi})
  - \mb{\xi}^\top \tilde{\mb{\nu}} \\
 &= - (\tvbeta + \mb{\xi})^\top \tvQ (\tvbeta + \mb{\xi})
  - \tilde{\mb{\mu}}^\top \tilde{\mb{\nu}} \leq 0.
\end{align*}
From this we see that the dual objective function is less than or equal
 to $0$.
Thus, the optimal objective value of the optimization problem is $0$.
Then the conditions (\ref{eq:apx_grad.cond}) is satisfied.
From lemma~$1$, the claim is proved.
\end{proof}

\section{Reformulate the Optimization Problem (10)}
\label{sec:reduced_QP}

We reformulate the optimization problem (10) to reduce the number of
variables and constraints.
Here again, 
we omit subscript of $\cM$, $\cO$ and $\cI$
to simplify the notation.

Define $\cB = \{ b_1, \ldots, b_{|\cB|} \}$,
$\cS_\cO = \{ i \in \{1, \ldots, |\cB| \} \mid b_i \in \cB_\cO \}$
and
$\cS_\cI = \{ i \in \{1, \ldots, |\cB| \} \mid b_i \in \cB_\cI \}$.
When $|\cB| \neq 0$, 
the optimization problem (10) can be re-formulated as
\begin{eqnarray*}
 \min_{\vbeta_\cB} & &
 \vbeta_\cB^\top \vQ' \vbeta_\cB + 
  (\mb{v}_\cB - \vQ'_{:,\cS_\cI} \mb{d}_{\cB_\cI})^\top \vbeta_\cB \\
 {\rm s.t.} & & 
 \left\{
 \begin{aligned}
  & \vQ'_{\cS_\cO,:} \vbeta_\cB + \mb{v}_{\cB_\cO} \geq \mb{0}, \\
  & \vQ'_{\cS_\cI,:} \vbeta_\cB + \mb{v}_{\cB_\cI} \leq \mb{0}, \\
  & \vbeta_{\cB_\cO} \geq \mb{0}, \ \vbeta_{\cB_\cI} \leq \mb{d}_{\cB_\cI},
 \end{aligned}
 \right.
\end{eqnarray*}
where
\begin{eqnarray*}
 \vQ' &=& \vQ_\cB 
  - 
  \begin{bmatrix}
   \vy_\cB & \vQ_{\cB,\cM} 
  \end{bmatrix}
  \vM^{-1}
  \begin{bmatrix}
   \vy_\cB^\top \\
   \vQ_{\cB,\cM}    
  \end{bmatrix}, \\
 \mb{u} &=& - \vM^{-1} 
  \begin{bmatrix}
   \vy_\cI^\top \\
   \vQ_{\cM, \cI} 
  \end{bmatrix}
  \vd_\cI, \\
 \mb{v} &=& 
 \begin{bmatrix}
  \vy & \vQ_{:,\cM} 
 \end{bmatrix}
 \mb{u} + \vQ_{:,\cI} \vd_\cI.
\end{eqnarray*}
On the other hand, when $|\cB| = 0$, (10) becomes
\begin{eqnarray*}
 \min_{\vbeta_\cB, \beta_0} &&
 \vbeta_\cB^\top \vQ_\cB \vbeta_\cB + 
  (\vQ_{\cB,\cI} \mb{d}_\cI - \vQ_{\cB,\cB_\cI} \mb{d}_{\cB_\cI})^\top \vbeta_\cB \\
 && - (\vy_\cI^\top \mb{d}_\cI + \vy_{\cB_\cI}^\top \mb{d}_{\cB_\cI}) \beta_0 \\
 {\rm s.t.}  && 
 \left\{
 \begin{aligned}
  & \vy_\cB^\top \vbeta_\cB + \vy_\cI^\top \mb{d}_\cI = 0 \\
  & \vQ_{\cB_\cO, \cB} \vbeta_\cB + \vQ_{\cB_\cO, \cI} \mb{d}_\cI + \vy_{\cB_\cO} \beta_0 \geq \mb{0} \\
  & \vQ_{\cB_\cI, \cB} \vbeta_\cB + \vQ_{\cB_\cI, \cI} \mb{d}_\cI + \vy_{\cB_\cI} \beta_0 \leq \mb{0} \\
  & \vbeta_{\cB_\cO} \geq \mb{0}, \  \vbeta_{\cB_\cI} \leq \mb{d}_{\cB_\cI}.
 \end{aligned}
 \right.
\end{eqnarray*}






\end{document}